\documentclass[10pt]{article}

\usepackage{graphicx}
\usepackage{subfigure} 
\usepackage{amssymb}
\usepackage{pifont}

\usepackage{mathrsfs}

\usepackage{fullpage}
\usepackage[protrusion=true,expansion=true]{microtype}

\usepackage{amsmath}
\usepackage{amsthm}

\usepackage{mathtools}

\usepackage{algorithm}
\usepackage{algorithmic}

\DeclareMathOperator*{\argmin}{arg\,min}

\usepackage{booktabs}       


\usepackage{multirow} 
\usepackage[table,dvipsnames]{xcolor}
\usepackage{colortbl}

\usepackage{times}
\usepackage{natbib}
\usepackage{bm}

\usepackage[colorlinks = true,
            linkcolor = blue,
            urlcolor  = blue,
            citecolor = blue,
            anchorcolor = blue]{hyperref}
\usepackage{cleveref}

\newtheorem{theorem}{Theorem}

\newtheorem{assum}{Assumption}
\newtheorem{lemma}{Lemma}
\newtheorem{proposition}{Proposition}
\newtheorem{remark}{Remark}
\newtheorem{corollary}{Corollary}
\newtheorem{definition}{Definition}

\usepackage[textsize=tiny]{todonotes}

\allowdisplaybreaks

\title{Achieving Linear Speedup in Non-IID Federated Bilevel Learning
\vspace{0.4cm}
} 

\author
{   
Minhui Huang\thanks{Meta; e-mail: {\tt huangmh1995@gmail.com}}
	,~~~Dewei Zhang\thanks{The Ohio State University; e-mail: {\tt   zhangdw.r@gmail.com}} 
	,~~~Kaiyi Ji\thanks{Department of CSE, University at Buffalo; e-mail: {\tt kaiyiji@buffalo.edu}} \footnote{The first two authors contributed equally. The corresponding author: Kaiyi Ji.}
}

\begin{document}

\maketitle

\begin{abstract}
Federated bilevel optimization has received increasing attention in various emerging machine learning and communication applications. Recently, several Hessian-vector-based algorithms have been proposed to solve the federated bilevel optimization problem. However, several important properties in federated learning such as the partial client participation and the linear speedup for convergence (i.e., the convergence rate and complexity are improved linearly with respect to the number of sampled clients) in the presence of non-i.i.d.~datasets, still remain open. In this paper, we fill these gaps by proposing a new federated bilevel algorithm named FedMBO with a novel client sampling scheme in the federated hypergradient estimation. We show that FedMBO achieves a convergence rate of $\mathcal{O}\big(\frac{1}{\sqrt{nK}}+\frac{1}{K}+\frac{\sqrt{n}}{K^{3/2}}\big)$ on non-i.i.d.~datasets, where $n$ is the number of participating clients in each round, and $K$ is the total number of iteration. This is the first theoretical linear speedup result for non-i.i.d.~federated bilevel optimization.     
Extensive experiments validate our theoretical results and demonstrate the effectiveness of our proposed method. 
\end{abstract}

\section{Introduction}
\label{introduction}

Federated learning is a privacy-preserving training paradigm over distributed networks that are designed for edge computing~\citep{mcmahan2017communication}. In federated learning, multiple edge devices (or clients) work together to learn a global model under the coordination of a central server. Instead of transmitting user data directly to the central server, each client stores data and computes locally and only transmits the privacy-preserving information. This paradigm is increasingly attractive due to the growing computational power of edge devices and the increasing demand for privacy protection. 
Federated learning is facing more challenges than traditional distributed optimization due to the high communication cost, data and system heterogeneity, and privacy concerns. Recent years have witnessed great progress in the algorithmic design and system deployment to address such challenges~\citep{wang2021cooperative,karimireddy2019scaffold,stich2020error}. 

Recently, federated bilevel learning has received increasing attention~\citep{chen2018federated,fallah2020personalized,zeng2021improving} because many modern machine learning problems naturally exhibit a bilevel optimization structure. For example, \citealt{chen2018federated, fallah2020personalized} studied the federated meta-learning problems, \citealt{khodak2021federated} proposed federated hyperparameter optimization approaches, and \citealt{zeng2021improving} improved the fairness in federated learning using a bilevel method. This motivates us to study the following federated bilevel optimization problem. 
\begin{align}\label{objective}
&\min_{x\in\mathbb{R}^{p}} \Phi(x)=f(x, y^*(x)) : = 
\frac{1}{m} \sum_{i=1}^m f_i(x, y^*(x)) \nonumber
\\& \;\;\mbox{s.t.} \quad y^*(x)\in \argmin_{y\in\mathbb{R}^q} g(x,y):=
\frac{1}{m}\sum_{i=1}^m g_i(x,y),
\end{align}
where $f_i(x, y) = \mathbb{E}f_i(x, y;\xi^i),\, g_i(x, y) = \mathbb{E}g_i(x, y;\zeta^i)$ are stochastic upper- and lower-level loss functions of client $i$, and $m$ is the total number of clients. Existing federated learning algorithms like FedAvg and its variants \citep{mcmahan2017communication} cannot be applied to solve the federated bilevel problem \cref{objective} due to the nested optimization structure,  the global Hessian inverse estimation in the hypergradient (i.e., $\nabla\Phi(x)$) computation, and the data heterogeneity in both the upper- and lower-level problems.

\begin{table*}[t]
\caption{Comparison of FedMBO with existing federated bilevel algorithms. $m$ is the total number of clients, $n$ is the size of sampled clients, and $\epsilon$ is the required accuracy. }
\label{sample-table}
\vskip 0.1in
\begin{center}
\begin{small}
\begin{tabular}{lcccc}
\toprule
Algorithm  & Sample Complexity& Partial Client Participation & Linear Speedup & Data Heterogeneity  \\
\midrule
LocalBSGVR \citep{gao2022convergence}    &$\mathcal{O}(\epsilon^{-3/2} m^{-1}$) &\ding{55}  &\ding{51}  &\ding{55} \\
AdaFBiO \citep{huang2022fast}    &$\mathcal{O}(\epsilon^{-3/2}$) &\ding{55}  &\ding{55}  &\ding{51} \\
FedNest \citep{tarzanagh2022fednest}   &$\mathcal{O}(\epsilon^{-2}$) &\ding{55} &\ding{55} &\ding{51}\\
FedMBO     &$\mathcal{O}(\epsilon^{-2} n^{-1}$) &\ding{51}  &\ding{51}  &\ding{51} \\
\bottomrule
\end{tabular}
\end{small}
\end{center}
\end{table*}

Recently, several approaches~\citep{li2022local,tarzanagh2022fednest,gao2022convergence, huang2022fast} have been proposed to efficiently solve \cref{objective}. \citealt{li2022local} considered a special case of \cref{objective}, where the lower-level problem is minimized only locally, i.e., $y_i^*(x)=\argmin_y g_i(x,y)$ for each client $i$. For the general case, 
\citealt{gao2022convergence} focused on the homogeneous setting with i.i.d.~datesets and proposed momentum-based distributed bilevel algorithms. In the more practical but challenging heterogeneous setting with non-i.i.d.~datasets, \citealt{huang2022fast} proposed a momentum-based method AdaFBiO based on fully local hypergradient estimators. 
\citealt{tarzanagh2022fednest} proposed FedNest based on an implicit differentiation based federated hypergradient estimator. In the inner loop, FedNest calls $T$ times of FedInn, which is a federated stochastic variance reduced gradient (FedSVRG) algorithm, to solve the lower-level problem. Then FedNest calls FedOut, which constructs a federated hypergradient estimator, to optimize the upper-level problem. 
 However, as shown in \Cref{sample-table}, both AdaFBiO and FedNest fail to achieve a linear speedup for convergence in training due to the fully local hypergradient estimation, and 
 the high correlation among the individual hypergradient estimators computed by all clients, respectively. In addition, they are restricted to the full client participation.  
 Then, an important but open question remains: 
\begin{list}{$\bullet$}{\topsep=1ex \leftmargin=0.2in \rightmargin=0.in \itemsep =0.06in}
\item {\em Can we develop an easy-to-implement federated method, which achieves a linear speedup for convergence in the general heterogeneous setting, and allows flexible partial client participation?}
\end{list}

\vspace{0.2cm}

\noindent\textbf{Our contributions.} 
In this paper, 
we provide an affirmative answer to the above question by proposing a novel federated algorithm called Federated Minibatch Bilevel Optimization (FedMBO). Our contributions are summarized as follows.
\begin{list}{$\bullet$}{\topsep=0.2ex \leftmargin=0.2in \rightmargin=0.in \itemsep =0.06in}
    \item The proposed FedMBO follows a double-loop scheme in bilevel optimization and consists of two important components. For the inner loop,  FedMBO adopts a simple Minibatch Stochastic Gradient Descent (SGD) algorithm. Compared with FedAvg and FedSVRG, the minibatch SGD and its accelerated variant are more immune to the heterogeneity of the problem \citep{woodworth2020minibatch}, which is critical in achieving  the linear speedup for convergence under the bilevel optimization structure.  For the outer loop, FedMBO features a Parallel Hypergradient Estimator (PHE) with a novel multi-round client sampling scheme. Compared to IHGP \citep{tarzanagh2022fednest}, our PHE procedure allows either full or partial client participation, and more importantly, achieves a variance bound linearly decreasing w.r.t.~the number of participating clients. We anticipate that PHE can be of independent interest to other settings such as decentralized or asynchronous bilevel optimization. 

    \item We show that FedMBO achieves a convergence rate of $\mathcal{O}\big(\frac{1}{\sqrt{nK}}+\frac{1}{K}+\frac{\sqrt{n}}{K^{3/2}}\big)$ and a sample complexity (i.e., the number of samples to achieve an $\epsilon$-stationary point) of $\mathcal{O}(\epsilon^{-2}n^{-1})$, which outperforms that of FedNest~\citep{tarzanagh2022fednest} by an order of $n$ due to the linear speedup. As shown in \Cref{sample-table}, compared to the momentum-based LocalBSGVR~\citep{gao2022convergence} and AdaFBiO~\citep{huang2022fast}, our FedMBO is more flexible with partial client participation, and more importantly, achieves the linear speedup for convergence even in the presence of data heterogeneity.   
    
    \item We conduct extensive experiments to validate our theoretical results, and further demonstrate the effectiveness of our proposed federated hypergradient estimator and the FedMBO algorithm.    
\end{list}

\subsection{Related Work}

{\bf Bilevel optimization approaches}: Bilevel optimization was first introduced in 1970's \citep{bracken1973mathematical} and has been being studied in the past decades. Since then, tremendous efforts have been made to reformulate the bilevel problem as a single-level optimization problem and develop efficient algorithms to solve it\citep{aiyoshi1984solution, edmunds1991algorithms, hansen1992new, shi2005extended}. Recently, several prevailing machine learning applications can be naturally formulated as a bilevel programming problem \citep{maclaurin2015gradient, pedregosa2016hyperparameter,finn2017model, franceschi2017forward, franceschi2018bilevel, ji2020convergence}, which brings a lot of attention to the bilevel programming in the machine learning community. On the theoretical side, there are many existing works deriving both asymptotic \citep{franceschi2018bilevel, shaban2019truncated,liu2021value} and non-asymptotic \citep{ghadimi2018approximation, ji2021bilevel, hong2020two, chen2021closing,guo2021randomized,huang2022efficiently} convergence analysis for the determinstic or stochastic bilevel optimization. For example, \citealt{ghadimi2018approximation,hong2020two,ji2021bilevel,arbel2022amortized} proved the convergence for SGD type of bilevel methods via the approximate implicit differentiation (AID) approach. 
\citealt{yang2021provably, chen2021single, khanduri2021near,guo2021randomized,dagreou2022framework} adopted the variance reduction and momentum techniques into stochastic bilevel programming to achieve better complexity results. 

\vspace{0.2cm}
\noindent {\bf Federated learning}: At the core of federated learning is the prevailing FedAvg algorithm and its variants \citep{mcmahan2017communication, li2020federated, karimireddy2019scaffold, mitra2021linear, acar2021federated, stich2018local, yu2019parallel, yang2020achieving, qu2020federated} to address the communication efficiency and the data privacy concerns. We review literature with a focus on the analysis of the linear speedup for convergence. In the homogeneous setting, two variants of FedAvg were proposed to achieve linear speedup \citep{stich2018local, yu2019parallel} under the assumptions of bounded gradient and full client participation. Later, \citealt{wang2021cooperative, stich2020error} removed the bounded gradient assumption and established a convergence rate of $\mathcal{O}(\epsilon^{-2}m^{-1})$. In the heterogeneous setting, the SCAFFOLD algorithm \citep{karimireddy2019scaffold} achieves the first linear speedup convergence rate using a variance reduction framework and is independent of the level of heterogeneity. After that, several variants of FedAvg \citep{yang2020achieving, qu2020federated} have also been proved to achieve linear speedup. Another interesting line of work focuses on the comparison between FedAvg and minibatch SGD \citep{woodworth2020local, woodworth2020minibatch}. In the homogeneous case, FedAvg provably outperforms minibatch SGD and its accelerated versions~\citep{woodworth2020local}. However, when the heterogeneity level is high, FedAvg is shown to be worse than minibatch SGD. 

\vspace{0.2cm}
\noindent {\bf Distributed bilevel optimization}:
For the decentralized stochastic bilevel optimization (DSBO) problem, \citealt{lu2022decentralized, terashita2022personalized} studied the setting where the clients have their own local lower problems and thus the communication for the lower-level part can be saved, and  \citealt{yangdecentralized, chen2022decentralized, chen2022decentralized2} considered a more general global setup, in which all the clients target solving a global lower-level problem together. The most related works to this paper is the FedNest algorithm \citep{tarzanagh2022fednest}, which achieves a sample complexity of $\mathcal{O}(\epsilon^{-2})$. This result was further improved by the momentum-based federated bilevel algorithms in~\citealt{gao2022convergence,li2022local} in the homogeneous setting.
Our proposed FedMBO achieves the first linear speedup result in the heterogeneous setting. 


\section{Definitions and Assumptions}
Throughout this paper, we make the following standard assumptions, as typically adopted in bilevel optimization. 
\begin{definition}
A function $h:\mathbb{R}^{n_1}\mapsto\mathbb{R}^{n_2\times n_3}$ is Lipschitz continuous with constant $L$ if 
\begin{equation}
    \left\|h(z_1)-h(z_2)\right\|\leq L\left\|z_1-z_2\right\| \ \ \forall z_1, z_2\in\mathbb{R}^{n_1},\nonumber
\end{equation}
where $\left\|\cdot\right\|$ denotes the Euclidean norm of a vector or matrix depending on the value of $n_3$.
\end{definition}
\begin{definition}
    A solution $x$ is $\epsilon$-accurate stationary point if $\mathbb{E}\left\|\nabla \Phi(x)\right\|^2\leq \epsilon$, where $x$ is the output of an algorithm. 
\end{definition}

 Let $z=(x,y)\in\mathbb{R}^{p+q}$ denotes all parameters. 

\begin{assum}\label{ass:lipschitz}(Lipschitz properties).
For all $i\in [m]:$
$f_i(z)$, $\nabla f_i(z)$, $\nabla g_i(z)$, $\nabla^2 g_i(z)$ are $\ell_{f,0}$, $\ell_{f,1}$, $\ell_{g,1}$, $\ell_{g,2}$-Lipshitz continuous, respectively.
\end{assum}

\begin{assum}\label{ass:strong_convex}(Strong convexity)
For all $i\in [m]:$
$g_i(x,y)$ is $\mu_g$-strongly convex in $y$ for any fixed $x\in\mathbb{R}^{q}$.
\end{assum}

\begin{assum}\label{ass:sto_sample}(Unbiased estimators).
For all $i\in [m]:$
$\nabla f_i(z;\xi)$, $\nabla g_i(z;\zeta)$, $\nabla^2 g_i(z;\zeta)$ are unbiased estimators of $\nabla f_i(z)$, $\nabla g_i(z)$, $\nabla^2 g_i(z)$, respectively. 
\end{assum}

\begin{assum}\label{ass:bounded_var}(Bounded variances).
For all $i\in [m]:$ there exist constants $\sigma_f^2$, $\sigma^2_{g,1}$, and $\sigma^2_{g,2}$, such that 
\begin{align}
    &\mathbb{E}_{\xi}\left\|\nabla f_i(z;\xi)-\nabla f_i(z)\right\|^2]\leq \sigma_f^2,
    \nonumber
    \\
    &\mathbb{E}_{\zeta}\left\|\nabla g_i(z;\zeta)-\nabla g_i(z)\right\|^2\leq \sigma_{g,1}^2,
    \nonumber
    \\
    &\mathbb{E}_{\zeta}\left\|\nabla^2 g_i(z;\zeta)-\nabla^2 g_i(z)\right\|^2\leq \sigma_{g,2}^2.
    \nonumber
\end{align}
\end{assum}

As typically adopted in the analysis for partial client participation in federated learning, the following assumption controls the difference between the local gradient at each client $\nabla g_i(z)$ and the global gradient $\nabla g(z)$. 
\begin{assum}\label{ass:bounded_global_variance} For all $i\in [m]:$ there exist a constant $\sigma_g$, such that $\mathbb{E}\left\|\nabla g_i(z)-\nabla g(z)\right\|^2\leq \sigma_g^2.$
\end{assum}

\section{Algorithms}\label{sec:alg}
To solve the bilevel problem in \cref{objective}, the biggest challenge lies in computing the federated hypergradient $\nabla \Phi(x)=(1/m)\sum_{i=1}^m \nabla f_i(x,y^*(x))$, whose explicit form can be obtained as follows via implicit differentiation. 
\begin{lemma}\label{le:implicitfunction}
    Under Assumptions~\ref{ass:lipschitz} and~\ref{ass:strong_convex}, we have 
    \begin{align}
        \nabla f(x,y^*&(x)) = \nabla_x f(x,y^*(x))-\nabla^2_{xy} g(x,y^*(x))\left[\nabla^2_{yy} g(x,y^*(x))\right]^{-1}\nabla_y f(x,y^*(x)),\label{lemma:implicitfunction}
    \end{align}
    where $\nabla^2_{yy} g(x,y)$ is defined as the Hessian matrix of $g$ with respect to $y$ and $\nabla^2_{xy} g(x,y)$ is defined as
    \begin{equation}
    \nabla^2_{xy} g(x,y) := \begin{bmatrix} 
    \frac{\partial^2}{\partial x_1 \partial y_1}g(x,y) & \ldots &\frac{\partial^2}{\partial x_1 \partial y_q}g(x,y) 
    \\
    & \ldots & 
    \\
    \frac{\partial^2}{\partial x_p \partial y_1}g(x,y) & \ldots &\frac{\partial^2}{\partial x_p \partial y_q}g(x,y)
    \end{bmatrix}.
    \nonumber
\end{equation}
\end{lemma}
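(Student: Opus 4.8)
The plan is to regard $\Phi(x)=f(x,y^*(x))$ as the composition of $f$ with the inner solution map $y^*(\cdot)$ and to differentiate through this composition by the chain rule; the only nontrivial ingredient is the Jacobian $\nabla y^*(x)$, which I would obtain by implicit differentiation of the lower-level optimality condition. Before invoking the chain rule, I would first record that under Assumption~\ref{ass:strong_convex} the map $y\mapsto g(x,y)$ is $\mu_g$-strongly convex, so for each $x$ the minimizer $y^*(x)$ is unique and the Hessian $\nabla^2_{yy}g(x,y^*(x))$ is invertible. Combined with the Lipschitz smoothness of $\nabla g$ in Assumption~\ref{ass:lipschitz}, the classical implicit function theorem guarantees that $y^*(\cdot)$ is continuously differentiable near each $x$, which is precisely what legitimizes the subsequent differentiation.

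Next I would write the first-order stationarity condition of the inner problem, $\nabla_y g(x,y^*(x))=0$, which holds identically in $x$. Taking the total derivative of this identity with respect to $x$ and applying the chain rule to the composite $\nabla_y g(x,y^*(x))$ gives
\[
\nabla^2_{yx}g(x,y^*(x)) + \nabla^2_{yy}g(x,y^*(x))\,\nabla y^*(x) = 0,
\]
where $\nabla^2_{yx}g$ is the $q\times p$ mixed Hessian block, i.e.\ the transpose of the block $\nabla^2_{xy}g$ defined in the statement. Since $\nabla^2_{yy}g(x,y^*(x))$ is invertible, I can solve explicitly for the Jacobian,
\[
\nabla y^*(x) = -\left[\nabla^2_{yy}g(x,y^*(x))\right]^{-1}\left[\nabla^2_{xy}g(x,y^*(x))\right]^{\top}.
\]

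Finally, applying the chain rule to $\Phi(x)=f(x,y^*(x))$ yields
\[
\nabla\Phi(x)=\nabla_x f(x,y^*(x)) + \left[\nabla y^*(x)\right]^{\top}\nabla_y f(x,y^*(x)),
\]
and substituting the expression for $\nabla y^*(x)$, together with the symmetry of $\nabla^2_{yy}g$ (so its inverse equals its own transpose), recovers exactly \cref{lemma:implicitfunction}. I expect the main obstacle to be bookkeeping rather than conceptual: one must track the transpose and dimension conventions for $\nabla^2_{xy}g\in\mathbb{R}^{p\times q}$ versus $\nabla^2_{yx}g\in\mathbb{R}^{q\times p}$ consistently throughout, and one must first justify the differentiability of $y^*$ through the implicit function theorem before the formal differentiation of the stationarity condition is valid. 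The strong convexity in Assumption~\ref{ass:strong_convex} is the single hypothesis that supplies both the invertibility of $\nabla^2_{yy}g$ and the regularity of $y^*$ needed here.
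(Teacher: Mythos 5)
Your proposal is correct and follows exactly the standard implicit-differentiation argument (differentiate the stationarity condition $\nabla_y g(x,y^*(x))=0$, invert $\nabla^2_{yy}g$ via strong convexity, apply the chain rule), which is precisely the proof the paper defers to by citing \citealt{ghadimi2018approximation}. Your added care about the implicit function theorem justifying differentiability of $y^*$ and about the transpose conventions for $\nabla^2_{xy}g$ versus $\nabla^2_{yx}g$ is appropriate and consistent with the cited source.
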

To employ the above lemma, several challenges arise. First, the evaluation of the federated hypergradient in~\cref{lemma:implicitfunction} requires the approximation of the minimizer $y^*(x)$ of the lower-level problem, which may introduce a big bias due to the client drift. 
We propose to use the simple minibatch SGD as the lower-level optimizer, as elaborated in~\Cref{algoelaborate:mini_bacth}, to mitigate the impact of the lower-level client drift on the final convergence rate. Second, the stochastic approximation of the infeasible Hessian inverse matrix $\left[\nabla^2_{yy} g(x,y^*(x))\right]^{-1}\nabla_y f(x,y^*(x))$  in \Cref{le:implicitfunction} often involves the computation of a series of global Hessian-vector products in a nonlinear manner, which complicates the implementation and may introduce a large estimation variance.  
Third, the federated hypergradient estimation may suffer from a large bias 
due to both the upper- and lower-level client drifts. 
In this paper, we propose a new algorithm FedMBO, which contains two main components, i.e., a minibatch SGD based lower-level optimizer and a novel federated hypergradient estimator, to address the above challenges, respectively.   
\begin{algorithm}[h]
  \caption{{Heterogeneous Distributed Minibatch Bilevel Optimization with Partial Clients Participation}}
  \label{alg:HDMB-Partial} 
  \begin{algorithmic}[1]
    \STATE {\bfseries Input:} full client index set $[m]$, partial clients $n$ participation, batch size $S$ of local SGD at inner loop, initial point $(x^0, y^0)$, $N\in\mathbb{N}^+$
    \FOR{$k = 0, 1, \ldots K-1$}
    \STATE  $y^{k,0} = y^k$
    \FOR{$t = 0, 1, \ldots T-1$}
    \STATE Sample client index subset $C^{k,t}=\big\{c^{k,t}_1,...,c^{k,t}_n\big\}$ with $\left |C^{k,t} \right | = n$ 
    \FOR{$i \in \left[n\right]$ in parallel}
    \STATE Sample batch $S_{i}^{k, t} = \{ \xi_{i, 0}^{k, t}, \xi_{i, 1}^{k, t}, \ldots, \xi_{i, S-1}^{k, t} \}$ 
    \STATE Compute {\small$G_i^{k, t} = \frac{1}{S} \sum_{j = 0}^{S-1} \nabla g_{c^{k,t}_i}(x^k,y^{k, t}; \xi_{i, j}^{k, t})$}
    \ENDFOR
    \STATE $G^{k, t} =  \frac{1}{n} \sum_{i = 1}^{n}G_i^{k, t}$
    \STATE $y^{k,t+1} = y^{k,t} - \beta_{k,t} G^{k, t}$
    \ENDFOR
    \STATE $y^{k+1} = y^{k, T}$
    \STATE $\{\mathcal{H}_i\} = {\textbf{PHE}}(x^k, y^{k+1},N,n)$
    \STATE $h = \frac{1}{n} \sum_{i = 1}^{n} \mathcal{H}_i$
    \STATE $x^{k+1} = x^k - \alpha_k h$
    \ENDFOR
  \end{algorithmic}
\end{algorithm}

\subsection{Minibatch SGD for Lower-level Updates}\label{algoelaborate:mini_bacth}
 To efficiently solve the lower-level problem, one popular approach is FedAvg. Starting from a common initialization, the clients in FedAvg run multiple local SGD updates on its own objective, which are then  aggregated to update the inner variable $y$. However, it has been shown in \citealt{tarzanagh2022fednest} that FedAvg introduces an undesirable hypergradient estimation bias due to the large client drift. Thus, they proposed FedLin, as a variant of the variance reduction method FedSVRG~\citep{mitra2021linear}, to mitigate the impact of the client drift. However, FedLin has a more complex implementation due to the nest SVRG loop, and more importantly, 
as shown in \citealt{tarzanagh2022fednest}, 
its convergence error induced by the client drift 
is not linearly decreasing w.r.t.~the number of sampled clients, which is one crucial factor in missing the linear speedup in the convergence rate.

Inspired by a recent work \citep{woodworth2020minibatch}, we use the minibatch SGD as the lower-level solver, where
the clients compute their local minibatch stochastic gradients, which are further aggregated for a one-step update on $y$. In specific, 
we first sample a subset $C^{k,t}=\big\{c^{k,t}_1,...,c^{k,t}_n\big\}$ of clients, and each of them draws a local data batch $S_{i}^{k, t} = \{ \xi_{i, 0}^{k, t}, \xi_{i, 1}^{k, t}, \ldots, \xi_{i, S-1}^{k, t} \}$ with $\big |S_i^{k, t} \big| = S$ and computes the local stochastic gradient $\nabla g_{c^{k,t}_i}(x^k,y^{k, t}; \xi_{i, j}^{k, t})$. Then, the server aggregates the gradients as $$G^{k, t} = \frac{1}{nS} \sum_{i = 1}^{n} \sum_{j = 0}^{S-1} \nabla g_{c^{k,t}_i}(x^k,y^{k, t}; \xi_{i, j}^{k, t}),$$ and further run one-step SGD to update $y^{k,t}$ as  
$$y^{k,t+1} = y^{k,t} - \beta_{k,t} G^{k, t}.$$ 
 Compared with FedAvg and FedLin, the minibatch SGD admits a simpler implementation, and more importantly, is more resilient  to the data heterogeneity by  a more aggressive single update at all clients. As will be seen later, minibatch SGD provides a more accurate estimation of the lower-level solution, which is necessary in achieving the linear speedup. 
\begin{remark}
In the minibatch SGD implementation, we set the batch size to be larger than FedAvg, and hence more aggressive per-iteration progress is made. Thus, the computational cost of minibatch SGD is comparable to FedAvg. More importantly, minibatch SGD admits a much smaller client drift, which is critical in achieving the linear speedup. 
\end{remark}
\begin{remark}
    In the experiments (see \Cref{exp}), we demonstrate the great advantages of minibatch SGD over FedAvg in mitigating the client drift during the bilevel training, and in improving the overall communication efficiency. 
\end{remark}

\subsection{Federated Hypergradient Procedure}\label{algoelaborate:hypergradient}
In the non-federated setting, one often defines the surrogate
\begin{align}
\overline{\nabla} f(x,y) &= \nabla_x f(x,y)
- \nabla^2_{xy} g(x,y)[\nabla^2_{yy} g(x,y)]^{-1}\nabla_y f(x,y)\label{surrogate_bar_f}
\end{align}
to efficiently approximate the hypergradient $\nabla f(x,y^*(x))$ in~\cref{lemma:implicitfunction}. 
Compared with~\cref{lemma:implicitfunction}, the surrogate simply replaces $y^*(x)$ by its approximation $y$. A typical approach for efficiently approximating the surrogate is to use the Neumann series-based stochastic estimator.
\begin{align}\label{stoapprox}
    \overline{\nabla} f(x,y) \approx &\nabla_x f(x,y;\xi)-\nabla^2_{xy} g\big(x,y,;\zeta^{N^{'}+1}\big)\bigg[\frac{N}{l_{g,1}}\prod_{n=1}^{N^{'}}\Big(I-\frac{1}{l_{g,1}}\nabla_{yy}^2 g(x,y;\zeta^{n})\Big)\bigg]\nabla_y f(x,y;\xi)
\end{align}
where $N^{'}$ is  chosen from $\{0,...,N-1\}$ uniformly at random and $\{\xi,\zeta^1,...,\zeta^{N^{'}+1}\}$ are i.i.d.~samples. Particularly,~\citealt{ghadimi2018approximation,hong2020two} show that the inverse Hessian estimation bias exponentially decreases with the number of samples $N$, i.e., 
\begin{align}
 \Big\|\Big[&\nabla^2_{yy} g(x,y)\Big]^{-1}-\mathbb{E}\Big[\frac{N}{\ell_{g,1}}\prod_{n=1}^{N^{'}}\Big(I-\frac{1}{\ell_{g,1}}\nabla_{yy}^2 g(x,y;\zeta^{n})\Big)\Big]\Big\|\leq \frac{1}{\mu_g}\Big(1-\frac{\mu_g}{\ell_g}\Big)^N\label{Neumann_series}
\end{align}
where the expectation is taken with respect to both $N'$ and $\zeta$. 
However, in the federated setting, the computation of the hypergradient is challenging due to client drift by the data heterogeneity, and the computation of a   
series of global Hessian matrices in a nonlinear manner, as shown in \cref{stoapprox}. 
To address such challenges,~\citealt{tarzanagh2022fednest} proposed 
the following federated hypergradient estimator:
\begin{equation}
    h_i:= \nabla_x f_i(x,y;\xi) -\nabla^2_{xy} g_i(x,y;\zeta^{N^{'}+1})p_{N^{'}},
    \nonumber
\end{equation}
where the global estimator $p_{N'}$ of  the Hessian-inverse-vector product $[\nabla^2_{yy} g(x,y)]^{-1}\nabla_y f(x,y)$ takes the form of 
\begin{align*}
    p_{N'} = \frac{N}{\ell_{g,1}}\prod_{n=1}^{N'}\Big(I - \frac{1}{\ell_{g,1}\left|S_n\right|}\sum_{i=1}^{\left|S_n\right|}\nabla_{yy}^2 g_i(x,y;\zeta_{i,n})\Big)\frac{1}{\left|S_0\right|} \sum_{i\in S_0} \nabla_y f_i(x,y;\xi_{i,0}),
    \nonumber
\end{align*}
which is constructed by computing and aggregating local Hessian-vector products in $N'$ communication rounds.  

\begin{algorithm}[h]
    \caption{{Parallel Hypergradient Estimator with $n$ Clients Participation (PHE)}}
   \label{alg:PHE_n}
 \begin{algorithmic}[1]
   \STATE Sample clients $C^0 = \{c^0_1,...,c^0_n\}$ 
   \FOR{$i \in [n]$ in parallel}
   \STATE Sample data points $\theta^{i}, \phi^i$
   \STATE Compute $d_i=\nabla_y f_{c_i^0}(x^k, y^{k+1}; \phi^{i})$
   \STATE Compute $p_{i, 0} = \frac{N}{l_{g,1}}\nabla_y f_{c_i^0}(x^k, y^{k+1}; \theta^{i})$
   \STATE Generate $N_i \in\left\{0,1,...,N-1\right\}$ uniform at random
   \ENDFOR
   \FOR{$l = 1, \ldots, \max\left\{N_i,i\in\left[n\right]\right\} $}
   \STATE Sample $C^l = \{c^l_1,...,c^l_n\}$ 
   \FOR{$i\in [n]$ in parallel}
   \STATE Sample a data point $\zeta^{i, l}$
   \IF{$l \le N_i$}
   \STATE Compute $p_{i,l}=\big(I - \frac{1}{\ell_{g, 1}}\nabla_{yy}^2 g_{c_i^l}(x^k, y^{k+1};\zeta^{i, l})\big)p_{i,l-1}$
   \ELSE
   \STATE Set $p_{i,l}=p_{i,l-1}$
   \ENDIF
   \ENDFOR
   \ENDFOR
   \STATE Sample  $C^{\max \left\{N_i\right\}+1} = \{c^{\max \left\{N_i\right\}+1}_1,...,c^{\max \left\{N_i\right\}+1}_n\}$ 
   \FOR{$i\in [n]$ in parallel}
   \STATE Sample a data point $\omega^i$
   \STATE $\mathcal{H}_i=d_i - \nabla^2_{xy} g_{{c_i^{\max \left\{N_i\right\}+1}}}(x^k, y^{k+1}; \omega^i) p_{i,\max \left\{N_i\right\}}$
   \ENDFOR
   \STATE Return $\mathcal{H}=\{\mathcal{H}_i\}_{i\in [n]}$
 \end{algorithmic}
 \end{algorithm}

 \begin{figure*}[t]
\centering
     \includegraphics[scale=0.55]{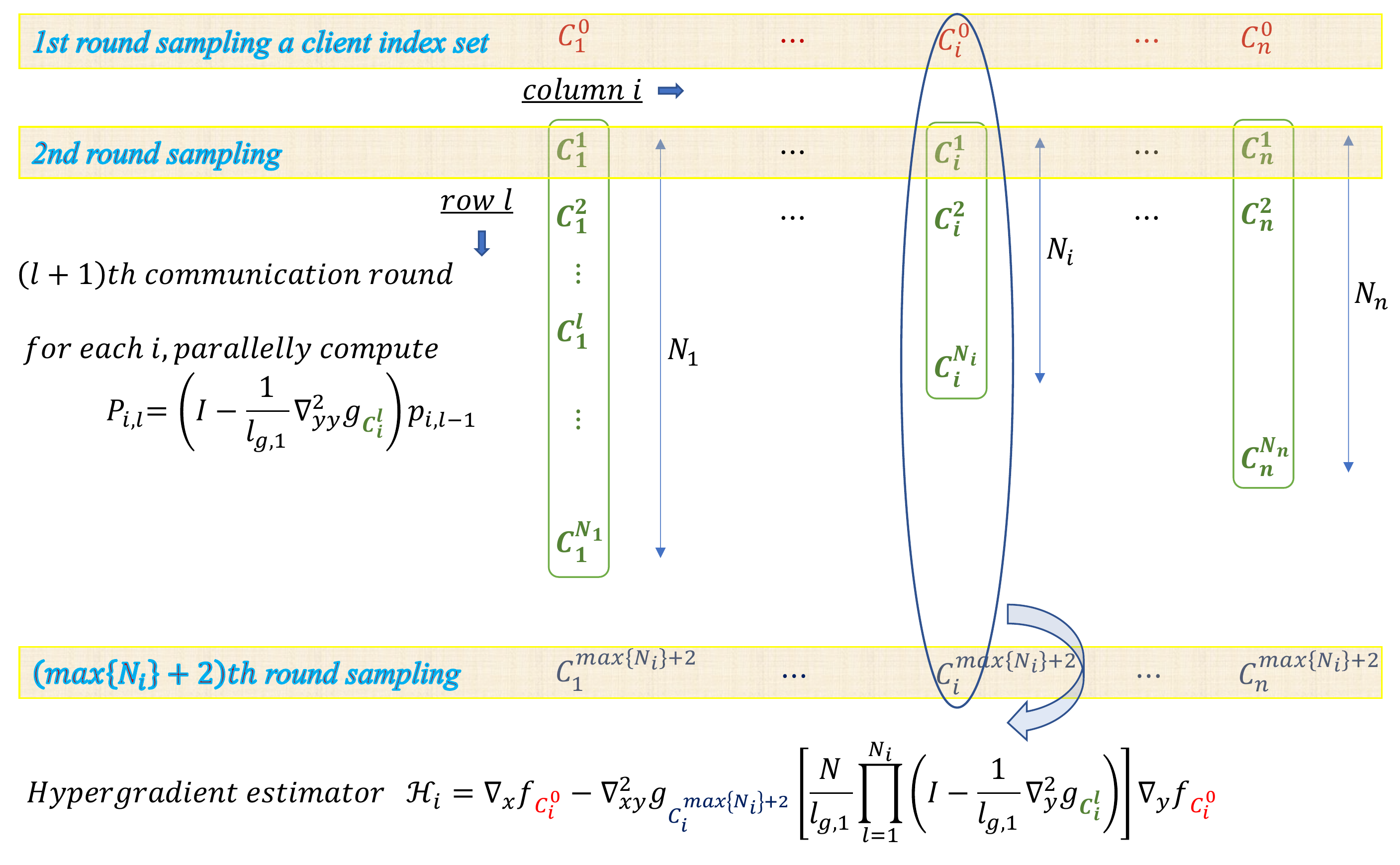}
    \caption{Illustration diagram of client sampling for the federated hypergradient estimation in~\Cref{alg:PHE_n}.}\label{fig:samplingdiagram}
\end{figure*}

However, there are two main limitations of the above federated hypergradient estimator. First, the estimator requires full client participation because each client $i$ needs to compute an $h_i$. 
Second, the $h_i's$ are highly correlated due to the shared global estimation $p_{N^{'}}$. As a result, the variance of $\frac{1}{m}\sum_{i=1}^m h_i$ cannot be shown to decay w.r.t.~$m$, which turns out to be the bottleneck for achieving the linear speedup.


To deal with these challenges, we propose a new federated hypergradient estimator with a novel client sampling and communication scheme. 
{As shown by~\Cref{alg:PHE_n} and illustrated by \Cref{fig:samplingdiagram}, each communication round $l$ (highlighted by the yellow shallow in \Cref{fig:samplingdiagram})  samples $n$ clients ($n\leq m$) indexed by $\{c_0^l,...,c_n^l\}$, and then the sampled clients compute the Hessian-vector product $(I - \frac{1}{\ell_{g, 1}}\nabla_{yy}^2 g_{c_i^l}(x^k, y^{k+1};\zeta^{i, l}))p_{i,l-1}$, which are used for the Hessian-vector construction in the next communication round. In the vertical direction of \Cref{fig:samplingdiagram} (i.e., from line 8 to line 18 in~\Cref{alg:PHE_n}),}
the clients in each column are involved to construct an individual component $\mathcal{H}_i$  of the federated hypergradient estimator.
The proposed estimators $\mathcal\{H_i\}$ take the form  of 
\begin{align}\label{def_of_hypergrad}
    \mathcal{H}_i(x^k,y^{k+1})
     = &\nabla_x f_{c^0_i}(x^k, y^{k+1}; \phi^{i})- \nabla^2_{xy} g_{c_i^{\max\{N_i\}+1}}(x^k, y^{k+1}; \omega^i)
    \nonumber
    \\
    &\quad\times\bigg[\frac{N}{l_{g,1}}\prod_{l=1}^{N_i}\Big(I - \frac{1}{\ell_{g, 1}}\nabla_{yy}^2 g_{C_i^l}(x^k, y^{k+1};\zeta^{i, l})\Big)\bigg]\nabla_y f_{c^0_i}(x^k, y^{k+1}; \theta^i). 
\end{align}


\subsection{Entire Procedure}\label{algoelaborate:wrapup}
The previous two sections describe the lower-level updating procedure on $y$ and the federated hypergradient estimator of the proposed FedMBO method. In this section, 
we briefly summarize the whole algorithm, which is formally described in~\Cref{alg:HDMB-Partial}. At the beginning of FedMBO, we specify the number of participating clients $n\leq m$, the batch size $S$ for the minibatch SGD implemented at the inner loop, and the constant $N$ controlling the Hessian inverse approximation accuracy. At each round $k=0,1,..., K-1$, FedMBO first runs minibatch SGD to update $y$, then constructs the federated hypergraident estimator using \Cref{alg:PHE_n}, and finally updates the outer variable $x$ based on the hypergradient  estimator. We do not run multiple local updates in the updates of $x$ because the federated hypergradient estimator $h$ requires the global information, which is unavailable for local updates of each client.  


\section{Main Results}\label{mainresults}
As discussed in the previous section,~\Cref{alg:PHE_n} generates  the federated hypergradient estimators $\left\{\mathcal{H}_i\right\}$ for estimating $\nabla f(x,y^*(x))$. With slight abuse of notation, we define $\mathcal{H}_i(x^k,y^{k+1})$ to be the output of~\Cref{alg:PHE_n} at the $k$-th round of~\Cref{alg:HDMB-Partial}. For different $i,j$, we have $\mathbb{E}\left[\mathcal{H}_i(x^k,y^{k+1})\right]=\mathbb{E}\left[\mathcal{H}_j(x^k,y^{k+1})\right]$ 
and $$\mathbb{E}\left[\mathcal{H}_i(x^k,y^{k+1})|\mathcal{F}^k\right]=\mathbb{E}\left[\mathcal{H}_j(x^k,y^{k+1})|\mathcal{F}^k\right],$$ where $\mathcal{F}^k:=\sigma\left\{y^0,x^0,...,y^k,x^k,y^{k+1}\right\}$ denotes the filtration that captures all the randomness up to the $k$-th outer loop. We denote $\overline{\mathcal{H}}(x,y):=\mathbb{E}\left[\mathcal{H}_i(x^k,y^{k+1})|\mathcal{F}^k\right]$. Referring to~\Cref{{algoelaborate:hypergradient}}, the resulted $\overline{\mathcal{H}}(x,y)$ is ``close'' to the surrogate function $\overline{\nabla}f$ defined in~\cref{{surrogate_bar_f}}, except its matrix inverse approximation. Indeed, the following~\Cref{le:bounded_matrix} shows that the bias between $\overline{\mathcal{H}}(x,y)$ and $\overline{\nabla}f$ decreases exponentially with respect to $N$.

\begin{proposition}\label{le:bounded_matrix}
Under Assumptions~\ref{ass:lipschitz} to~\ref{ass:bounded_var}, we have 
\begin{equation}
    \left\|\overline{\mathcal{H}}(x^k,y^{k+1})-\overline{\nabla}f(x^k,y^{k+1}))\right\|\leq b,\nonumber
\end{equation}
where {$b=\kappa_g \ell_{f,1}\left(\left(\kappa_g -1\right)/\kappa_g\right)^N$} and {$N$ is the input parameter to~\Cref{alg:HDMB-Partial}}.
\end{proposition}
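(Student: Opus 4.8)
The plan is to compute the conditional mean $\overline{\mathcal{H}}(x^k,y^{k+1})=\mathbb{E}[\mathcal{H}_i\mid\mathcal{F}^k]$ in closed form and to recognize it as the surrogate $\overline{\nabla}f$ of \cref{surrogate_bar_f} with the exact inverse $[\nabla^2_{yy}g]^{-1}$ replaced by a truncated Neumann partial sum; the claimed bias $b$ then equals an operator norm times the classical Neumann truncation error, for which \cref{Neumann_series} already supplies an exponential bound. First I would apply linearity of expectation to \cref{def_of_hypergrad}. Since every client index is drawn uniformly over $[m]$ and every stochastic sample is unbiased (Assumption~\ref{ass:sto_sample}), the leading term contributes $\mathbb{E}[\nabla_x f_{c_i^0}(x,y;\phi^i)]=\nabla_x f(x,y)$, so the whole problem reduces to evaluating the expectation of the matrix--vector product in the second term.

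The heart of the argument is to show that this expectation factorizes. Writing $H_l=\nabla^2_{yy}g_{c_i^l}(x,y;\zeta^{i,l})$, the second term is $\nabla^2_{xy}g_{c_i^{L+1}}(x,y;\omega^i)\,\big[\tfrac{N}{\ell_{g,1}}\prod_{l=1}^{N_i}(I-\tfrac{1}{\ell_{g,1}}H_l)\big]\,\nabla_y f_{c_i^0}(x,y;\theta^i)$ with $L=\max_j\{N_j\}$. I would first peel off the cross Hessian by conditioning on the sigma-algebra generated by all randomness except the client and sample drawn in the terminal round $L+1$: regardless of the (random) value of $L$, that round draws a fresh uniform client and fresh data, so its conditional mean is the deterministic matrix $\nabla^2_{xy}g(x,y)$ and, by the tower property, factors out on the left. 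For the remaining product-times-vector I would condition on the stopping index $N_i=t$; the $t$ Hessian factors then use mutually independent clients and samples across rounds $1,\dots,t$, the vector uses the independent round $0$, and $N_i$ is drawn independently of all samples, so ``expectation of a product $=$ product of expectations'' applies and each factor collapses to $I-\tfrac{1}{\ell_{g,1}}\nabla^2_{yy}g(x,y)$. This yields $\mathbb{E}[\,\cdot\mid N_i=t]=\tfrac{N}{\ell_{g,1}}(I-\tfrac{1}{\ell_{g,1}}\nabla^2_{yy}g)^{t}\nabla_y f$, and averaging over the uniform $N_i\in\{0,\dots,N-1\}$ cancels the prefactor $N$ and produces the truncated Neumann sum $\tfrac{1}{\ell_{g,1}}\sum_{t=0}^{N-1}(I-\tfrac{1}{\ell_{g,1}}\nabla^2_{yy}g)^{t}\nabla_y f$.

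Collecting the pieces gives
\begin{equation}
\overline{\mathcal{H}}-\overline{\nabla}f=-\,\nabla^2_{xy}g\,\Big[\tfrac{1}{\ell_{g,1}}\textstyle\sum_{t=0}^{N-1}\big(I-\tfrac{1}{\ell_{g,1}}\nabla^2_{yy}g\big)^{t}-[\nabla^2_{yy}g]^{-1}\Big]\nabla_y f,\nonumber
\end{equation}
all evaluated at $(x^k,y^{k+1})$. I would finish with submultiplicativity of the operator norm: $\|\nabla^2_{xy}g\|\le\ell_{g,1}$ by Assumption~\ref{ass:lipschitz}, the factor $\|\nabla_y f\|$ admits a uniform bound since the gradient of $f$ is bounded under the Lipschitz assumptions, and the bracketed matrix is exactly the Neumann truncation error, which by Assumption~\ref{ass:strong_convex} (so that $\mu_g I\preceq\nabla^2_{yy}g\preceq\ell_{g,1}I$) and \cref{Neumann_series} is at most $\tfrac{1}{\mu_g}\big((\kappa_g-1)/\kappa_g\big)^{N}$ with $\kappa_g=\ell_{g,1}/\mu_g$. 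Multiplying the three bounds delivers the stated $b=\kappa_g\,\ell_{f,1}\big((\kappa_g-1)/\kappa_g\big)^{N}$.

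The step I expect to be the main obstacle is making ``expectation of the product equals product of expectations'' fully rigorous under the two-dimensional sampling scheme of \Cref{alg:PHE_n}. One must track that round $0$, the intermediate rounds $1,\dots,N_i$, and the terminal round $L+1$ draw mutually independent clients and data, and in particular justify factoring out the cross Hessian even though its round index $L+1=\max_j\{N_j\}+1$ is random and correlated with $N_i$ through the other columns. The tower-property argument settles this because the conditional mean of the cross Hessian is a constant independent of $L$, but verifying the independence bookkeeping is where the care is needed.
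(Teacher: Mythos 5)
Your proposal is correct and follows essentially the same route as the paper: compute $\overline{\mathcal{H}}$ in closed form using the independence built into \Cref{alg:PHE_n}, identify the difference from $\overline{\nabla}f$ as $\nabla^2_{xy}g$ times the Neumann truncation error times $\nabla_y f$, and bound the three factors by $\ell_{g,1}$, \cref{Neumann_series}, and the gradient bound respectively. Your explicit tower-property bookkeeping for the random terminal round is actually more careful than the paper's one-line appeal to "independency by algorithmic design," and the small mismatch between the $\ell_{f,0}$ your argument actually produces and the $\ell_{f,1}$ in the stated constant $b$ is present in the paper's own proof as well.
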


The following two propositions explore the bounded variances of $\mathcal{H}_i$. Particularly, the $\mathcal{O}(1/n)$ factor of the bounded variance of the average of $\{\mathcal{H}_i\}_{i\in[n]}$ is presented in~\Cref{le:bounded_hyper_gradient_over_n}. Such a property highly relies on the independence among all the hypergradient estimators and plays an essential role in establishing the linear speedup. This is a key property that can be achieved by our proposed minibatch SGD and PHE algorithms and is missing in FedNest \citep{tarzanagh2022fednest} in the non-i.i.d setting.
\begin{proposition}\label{le:bounded_hyper_gradient}
Suppose Assumptions~\ref{ass:lipschitz} to~\ref{ass:bounded_var} hold for all $i\in [n]$. Then, we have 
\begin{align}
    \mathbb{E}\left[||\mathcal{H}_i(x^k,y^{k+1})-\overline{\mathcal{H}}(x^k,y^{k+1})||^2\right]&\leq \tilde{\sigma}_f^2,\nonumber\\
\mathbb{E}\left[||\mathcal{H}_i(x^k,y^{k+1})||^2 |\mathcal{F}^k\right]\leq \tilde{D}_f^2,\nonumber
\end{align}
where the constants $ \tilde{\sigma}_f^2$ and $\tilde{D}_f^2$ are given by 
\begin{align}
    \tilde{\sigma}_f^2:=& \sigma_f^2+\frac{3}{\mu_g^2}\left[(\sigma_f^2+\ell_{f,0}^2)(\sigma_{g,2}^2+2\ell_{g,1}^2)+\sigma_f^2\ell_{g,1}^2\right]
    =\mathcal{O}(\kappa_g^2),\nonumber\\
    \tilde{D}_f^2 :=& \big(\ell_{f,0}+\frac{\ell_{f,0}\ell_{g,1}}{\mu_g}+\frac{\ell_{f,1}\ell_{g,1}}{\mu_g}\big)^2+\tilde{\sigma}_f^2=\mathcal{O}(\kappa_g^2).\nonumber
\end{align}
\end{proposition}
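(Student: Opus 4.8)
The plan is to prove the two bounds in \Cref{le:bounded_hyper_gradient} by controlling each piece of the estimator $\mathcal{H}_i$ defined in~\cref{def_of_hypergrad}. First I would recall the structure: $\mathcal{H}_i$ is a product of three (conditionally) independent factors, namely the gradient $\nabla_x f_{c_i^0}$, the Hessian-inverse surrogate $\frac{N}{\ell_{g,1}}\prod_{l=1}^{N_i}(I-\frac{1}{\ell_{g,1}}\nabla_{yy}^2 g_{C_i^l})$, and the gradient $\nabla_y f_{c_i^0}$, together with the cross-derivative factor $\nabla_{xy}^2 g$. The key structural observation, which I would state up front, is that conditioned on $\mathcal{F}^k$ the only randomness in $\mathcal{H}_i$ comes from the index $N_i$, the sampled clients, and the fresh i.i.d.~data points, all of which are drawn independently across $i$. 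This independence is exactly what yields the $\mathcal{O}(1/n)$ factor later in \Cref{le:bounded_hyper_gradient_over_n} and must be invoked carefully here.

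For the second (boundedness) bound, I would proceed by bounding the conditional expectation of the squared norm of the product. The clean route is to take norms inside the product and use submultiplicativity: each factor $(I-\frac{1}{\ell_{g,1}}\nabla_{yy}^2 g_{C_i^l})$ has operator norm at most $1-\mu_g/\ell_{g,1}$ by Assumptions~\ref{ass:lipschitz} and~\ref{ass:strong_convex} (the Hessian eigenvalues lie in $[\mu_g,\ell_{g,1}]$), so the product of $N_i$ such factors is bounded by $(1-\mu_g/\ell_{g,1})^{N_i}$. Multiplying by the $\frac{N}{\ell_{g,1}}$ prefactor and summing the resulting geometric series over $N_i\in\{0,\dots,N-1\}$ (which appears with weight $1/N$ from the uniform draw) collapses to a $\frac{1}{\mu_g}$-type bound, reproducing the $\ell_{f,0}\ell_{g,1}/\mu_g$ and $\ell_{f,1}\ell_{g,1}/\mu_g$ terms. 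Combining the deterministic (mean) part with the variance part via $\mathbb{E}\|X\|^2=\|\mathbb{E}X\|^2+\mathbb{E}\|X-\mathbb{E}X\|^2$ then gives the stated $\tilde D_f^2$. The $\kappa_g:=\ell_{g,1}/\mu_g$ scaling is immediate once the $1/\mu_g$ factors are tracked.

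For the first (variance) bound I would decompose $\mathcal{H}_i-\overline{\mathcal{H}}$ so as to separate the variance contributed by the three sources of stochasticity. The natural decomposition writes $\mathcal{H}_i$ as the surrogate $\overline\nabla f$ evaluated with true (non-stochastic) quantities plus the stochastic perturbations in $\nabla_x f$, in $\nabla_y f$, and in the Hessian factors, then applies $\|a_1+a_2+a_3\|^2\le 3(\|a_1\|^2+\|a_2\|^2+\|a_3\|^2)$ — this is where the factor $3$ and the grouping $(\sigma_f^2+\ell_{f,0}^2)(\sigma_{g,2}^2+2\ell_{g,1}^2)+\sigma_f^2\ell_{g,1}^2$ in $\tilde\sigma_f^2$ come from. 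The variance from the $\nabla_x f$ term contributes the leading $\sigma_f^2$; the Hessian-product term contributes $\sigma_{g,2}^2$ (from $\nabla_{yy}^2 g$ and $\nabla_{xy}^2 g$ variances) amplified by the $1/\mu_g^2$ from the geometric sum and the $\ell$-Lipschitz/boundedness constants; the $\nabla_y f$ term contributes the $\sigma_f^2\ell_{g,1}^2$ piece. Throughout, I would use unbiasedness (Assumption~\ref{ass:sto_sample}) to kill cross terms and the bounded-variance bounds (Assumption~\ref{ass:bounded_var}) together with the Lipschitz bounds on the magnitudes of $\nabla f$ and $\nabla_{xy}^2 g$.

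The main obstacle I anticipate is bounding the variance of the \emph{product} of the stochastic Hessian factors $\prod_{l=1}^{N_i}(I-\frac{1}{\ell_{g,1}}\nabla_{yy}^2 g_{C_i^l})$, since a naive expansion of the variance of a product of $N_i$ random matrices is unwieldy. The trick is to exploit that the factors across different $l$ are \emph{independent} (fresh samples $\zeta^{i,l}$ and fresh client draws $C_i^l$ at each communication round), so the variance of the product telescopes: writing the product minus its mean as a telescoping sum over the first position where a stochastic factor is replaced by its expectation, each term factorizes into an operator-norm-bounded prefix, a single mean-zero factor whose second moment is controlled by $\sigma_{g,2}^2/\ell_{g,1}^2$, and a bounded suffix. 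Summing these contributions and handling the random stopping index $N_i$ via the uniform average and the resulting geometric series is the delicate bookkeeping step; once it is organized this way the constants assemble into the stated $\tilde\sigma_f^2$.
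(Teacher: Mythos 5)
Your proposal is correct and follows essentially the same route as the paper: the second bound is obtained exactly as in the paper via the decomposition $\mathbb{E}\|\mathcal{H}_i\|^2=\|\overline{\mathcal{H}}\|^2+\mathbb{E}\|\mathcal{H}_i-\overline{\mathcal{H}}\|^2$ together with a $1/\mu_g$-type bound on the Neumann-series mean, and the first bound is the standard variance estimate for the randomly truncated Neumann estimator. The only difference is one of detail: the paper simply invokes Lemma 1 of \citealt{hong2020two} for the variance bound $\tilde{\sigma}_f^2$, whereas you sketch the underlying telescoping/martingale-difference argument for the product of independent stochastic Hessian factors, which is precisely the content of that cited lemma.
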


\begin{proposition}
\label{le:bounded_hyper_gradient_over_n}
Under Assumptions~\ref{ass:lipschitz} to~\ref{ass:bounded_var}, we have 
\begin{align*}
    &\mathbb{E}\left[\left\|\frac{1}{n}\sum_{i=1}^n(\mathcal{H}_i(x^k,y^{k+1})-\overline{\mathcal{H}}(x^k,y^{k+1}))\right\|^2\right]\leq \frac{\tilde{\sigma}_f^2}{n},
\end{align*}
where {\small$\tilde{\sigma}_f:= \sigma_f^2+\frac{3}{\mu^2_g}\left((\sigma_f^2+\ell_{f,0}^2)(\sigma_{g,2}^2+2\ell_{g,1}^2)+\sigma_f^2\ell_{g,1}^2\right)$}. 
\end{proposition}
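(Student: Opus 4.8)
The plan is to expand the squared norm of the average into diagonal (variance) terms and off-diagonal (covariance) terms, control the diagonal with \Cref{le:bounded_hyper_gradient}, and show that every off-diagonal term vanishes because of the conditional independence of the estimators that is built into the PHE sampling scheme. Working conditionally on $\mathcal{F}^k$ (so that $\overline{\mathcal{H}}(x^k,y^{k+1})$ is a deterministic, $\mathcal{F}^k$-measurable vector) and abbreviating $\Delta_i := \mathcal{H}_i(x^k,y^{k+1})-\overline{\mathcal{H}}(x^k,y^{k+1})$, I would first record the identity
\[
\mathbb{E}\Big[\Big\|\tfrac{1}{n}\sum_{i=1}^n \Delta_i\Big\|^2 \,\Big|\, \mathcal{F}^k\Big] = \frac{1}{n^2}\sum_{i=1}^n \mathbb{E}\big[\|\Delta_i\|^2\,\big|\,\mathcal{F}^k\big] + \frac{1}{n^2}\sum_{i\neq j}\mathbb{E}\big[\langle \Delta_i,\Delta_j\rangle\,\big|\,\mathcal{F}^k\big].
\]
The first sum is bounded immediately by $n\,\tilde{\sigma}_f^2$ using the per-estimator variance bound $\mathbb{E}[\|\Delta_i\|^2\mid\mathcal{F}^k]\le\tilde{\sigma}_f^2$ from \Cref{le:bounded_hyper_gradient}, which contributes $\tilde{\sigma}_f^2/n$. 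Everything then reduces to showing that the cross terms are zero.

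The heart of the argument is the conditional independence of $\{\mathcal{H}_i\}_{i\in[n]}$ given $\mathcal{F}^k$. I would make this precise by further conditioning on the vector $\mathcal{N} := (N_1,\dots,N_n)$ of exponents drawn in line 6 of \Cref{alg:PHE_n}. Inspecting the estimator \cref{def_of_hypergrad}, the only coupling between two distinct columns $i\neq j$ is the common round index $\max\{N_i\}+1$ at which the cross-Hessian factor $\nabla^2_{xy}g$ is evaluated; all remaining randomness (the per-column indices $c_i^0,c_i^1,\dots$ sampled with replacement, and the per-column data $\theta^i,\phi^i,\zeta^{i,l},\omega^i$) is drawn independently across columns. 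Once $\mathcal{N}$ is fixed, $\max\{N_i\}$ becomes a constant, the clients $\{c_i^{\max\{N_i\}+1}\}_i$ are drawn independently (with replacement), and hence $\mathcal{H}_i$ and $\mathcal{H}_j$ are conditionally independent given $(\mathcal{F}^k,\mathcal{N})$. Because the client in the $\nabla^2_{xy}g$ factor is uniform and its data are fresh, its conditional mean equals $\nabla^2_{xy}g$ regardless of $\max\{N_i\}$; combined with the independence of the $\nabla_y f$, product, and $\nabla_x f$ factors across the distinct rounds they use, this shows $\mathbb{E}[\mathcal{H}_i\mid\mathcal{F}^k,\mathcal{N}]$ depends on $\mathcal{N}$ only through $N_i$. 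I denote this conditional mean by $\bar{h}(N_i)$, and note $\mathbb{E}_{N_i}[\bar{h}(N_i)] = \overline{\mathcal{H}}$ by definition.

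Combining these facts, for $i\neq j$ I would compute
\[
\mathbb{E}\big[\langle \Delta_i,\Delta_j\rangle\,\big|\,\mathcal{F}^k\big] = \mathbb{E}_{\mathcal{N}}\big[\big\langle \bar{h}(N_i)-\overline{\mathcal{H}},\, \bar{h}(N_j)-\overline{\mathcal{H}}\big\rangle\big] = \big\langle \mathbb{E}_{N_i}[\bar{h}(N_i)]-\overline{\mathcal{H}},\, \mathbb{E}_{N_j}[\bar{h}(N_j)]-\overline{\mathcal{H}}\big\rangle = 0,
\]
where the first equality uses conditional independence to factor the inner product of conditional means, the second uses the independence of $N_i$ and $N_j$ together with the bilinearity of the inner product, and the last uses $\mathbb{E}_{N_i}[\bar{h}(N_i)]=\overline{\mathcal{H}}$. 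Substituting back and taking the outer expectation over $\mathcal{F}^k$ yields the claimed bound $\tilde{\sigma}_f^2/n$.

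I expect the main obstacle to be the rigorous justification of this conditional independence, and in particular the careful handling of the shared round $\max\{N_i\}+1$ appearing in the cross-Hessian factor. The subtlety is that this round index is itself a function of \emph{all} columns' draws, so one must verify that, after conditioning on $\mathcal{N}$, the with-replacement client sampling and the independent per-column data samples genuinely decouple the columns, and that the law of the $\nabla^2_{xy}g$ factor does not depend on the value of $\max\{N_i\}$. This decoupling is precisely the design feature of PHE that the shared estimator $p_{N'}$ of FedNest lacks, and it is exactly what upgrades the per-estimator variance $\tilde{\sigma}_f^2$ into the $1/n$-decaying bound underlying the linear speedup.
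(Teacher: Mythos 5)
Your proof is correct and follows essentially the same route as the paper: expand the squared norm of the average into diagonal and cross terms, bound the diagonal by the per-estimator variance from \Cref{le:bounded_hyper_gradient}, and kill the cross terms via the pairwise independence of the $\mathcal{H}_i$ under with-replacement client sampling. The paper simply asserts this independence, whereas you additionally justify it by conditioning on $(N_1,\dots,N_n)$ to handle the shared round index $\max\{N_i\}+1$ --- a worthwhile clarification, but not a different argument.
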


We next characterize the convergence and complexity performance of the proposed algorithm. 
\vspace{0.2cm}
\begin{theorem}\label{thm:full_worker}
Suppose Assumptions~\ref{ass:lipschitz} to~\ref{ass:bounded_global_variance} hold and set 
\begin{align}
    \alpha_k &= \min \left\{\hat{\alpha}_1,\hat{\alpha}_2,\sqrt{\frac{n}{K}} \hat{\alpha}_3\right\},
    \nonumber
    \\
    \beta_{k,t} &= \Big(\frac{5M_f L_y}{\mu_g}+\frac{\eta L_{yx}\tilde{D}^2_f\hat{\alpha}_1}{2n\mu_g}\Big)\frac{\alpha_k}{T},
    \nonumber
\end{align}
for some positive constants $\hat{\alpha}_i$, $i=1,2,3$ independent of $K$, where the definition of the constant parameters $M_f, L_y, \eta, L_{yx}, \tilde{D}^2_f $ can be found in the appendix. Then, for any $K \geq 1$, the iterates $\left\{(x^k,y^k)\right\}_{k\geq 0}$ generated by~\Cref{alg:HDMB-Partial} satisfy
\begin{align*}
     \frac{1}{K} \sum_{k=0}^{K-1}& \mathbb{E} \left[\left\|\nabla f(x^k)\right\|^2\right] =
    \mathcal{O}\Big(\frac{\hat{\alpha}_3+\hat{\alpha}_3^{-1}}{\sqrt{nK}}+\frac{1}{\min(\hat{\alpha}_1,\hat{\alpha}_2)K}+{b^2}\Big).\nonumber
\end{align*}
where $b=\kappa_g l_{f,1}\left(\left(\kappa_g -1\right)/\kappa_g\right)^N$ and $N$ is the controlling input parameter to~\Cref{alg:PHE_n}, 
\end{theorem}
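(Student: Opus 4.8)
The plan is to run a nonconvex descent analysis on the outer objective $\Phi(x)=f(x,y^*(x))$, using \Cref{le:bounded_matrix,le:bounded_hyper_gradient,le:bounded_hyper_gradient_over_n} to control the bias and variance of the aggregated hypergradient $h=\frac1n\sum_{i=1}^n\mathcal{H}_i$, and coupling the outer recursion to an inner-loop error recursion through a Lyapunov function. First I would use that $\Phi$ is $L_\Phi$-smooth (a standard consequence of \Cref{ass:lipschitz,ass:strong_convex}, with $L_\Phi=\mathcal{O}(\kappa_g^3)$) and apply the descent lemma to the update $x^{k+1}=x^k-\alpha_k h$. Taking the conditional expectation given $\mathcal{F}^k$ and using $\mathbb{E}[h\mid\mathcal{F}^k]=\overline{\mathcal{H}}(x^k,y^{k+1})$ yields
\begin{align}
\mathbb{E}[\Phi(x^{k+1})\mid\mathcal{F}^k]\leq \Phi(x^k)-\alpha_k\big\langle\nabla\Phi(x^k),\overline{\mathcal{H}}(x^k,y^{k+1})\big\rangle+\tfrac{L_\Phi\alpha_k^2}{2}\,\mathbb{E}\big[\|h\|^2\mid\mathcal{F}^k\big].\nonumber
\end{align}

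Next I would dissect the two right-hand terms. For the cross term, the implicit-function identity in \Cref{le:implicitfunction} gives $\overline{\nabla}f(x^k,y^*(x^k))=\nabla\Phi(x^k)$, so I split $\overline{\mathcal{H}}-\nabla\Phi(x^k)$ into the Hessian-inverse bias $\overline{\mathcal{H}}-\overline{\nabla}f(x^k,y^{k+1})$, bounded by $b$ via \Cref{le:bounded_matrix}, plus the approximation error $\overline{\nabla}f(x^k,y^{k+1})-\overline{\nabla}f(x^k,y^*(x^k))$, bounded by $L_f\|y^{k+1}-y^*(x^k)\|$ from smoothness of the surrogate. A Young-inequality split then converts the cross term into $-\tfrac{\alpha_k}{2}\|\nabla\Phi(x^k)\|^2$ plus $\mathcal{O}(\alpha_k)$ multiples of $b^2$ and $\|y^{k+1}-y^*(x^k)\|^2$. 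For the second moment I would write $\mathbb{E}[\|h\|^2\mid\mathcal{F}^k]=\|\overline{\mathcal{H}}\|^2+\mathbb{E}[\|h-\overline{\mathcal{H}}\|^2\mid\mathcal{F}^k]\leq \|\overline{\mathcal{H}}\|^2+\tilde{\sigma}_f^2/n$, where the $1/n$ factor comes directly from \Cref{le:bounded_hyper_gradient_over_n} and is exactly the source of the linear speedup; the term $\|\overline{\mathcal{H}}\|^2$ is controlled by $\tilde{D}_f^2$ from \Cref{le:bounded_hyper_gradient}, or re-expressed via $\|\nabla\Phi(x^k)\|^2$ plus the same error terms.

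It then remains to control the inner error $\mathbb{E}\|y^{k+1}-y^*(x^k)\|^2$. Since $g$ is $\mu_g$-strongly convex and $\ell_{g,1}$-smooth in $y$, each minibatch step $y^{k,t+1}=y^{k,t}-\beta_{k,t}G^{k,t}$ contracts $\|y^{k,t}-y^*(x^k)\|^2$ by a factor $(1-\Theta(\mu_g\beta_{k,t}))$ while injecting a variance of order $\sigma_{g,1}^2/(nS)+\sigma_g^2$ reflecting the minibatch and the $n$ sampled clients. Iterating over $t=0,\dots,T-1$ and then accounting for the drift $\|y^*(x^k)-y^*(x^{k-1})\|\le L_y\alpha_{k-1}\|h\|$ produces a recursion for $\mathbb{E}\|y^k-y^*(x^k)\|^2$ coupled to the outer step size. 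I would fold this into a potential $V^k=\mathbb{E}[\Phi(x^k)]+c\,\mathbb{E}\|y^k-y^*(x^k)\|^2$ with $c$ chosen so the $\|y^k-y^*(x^k)\|^2$ terms telescope; the prescribed relation $\beta_{k,t}=\Theta(\alpha_k/T)$ is precisely what balances the inner contraction against the outer progress. Summing $V^{k+1}\le V^k-\tfrac{\alpha_k}{4}\mathbb{E}\|\nabla\Phi(x^k)\|^2+\tfrac{L_\Phi\alpha_k^2}{2}\tfrac{\tilde{\sigma}_f^2}{n}+\mathcal{O}(\alpha_k b^2)$ over $k$, dividing by $\tfrac14\sum_k\alpha_k$ with constant $\alpha_k\equiv\alpha=\min\{\hat{\alpha}_1,\hat{\alpha}_2,\sqrt{n/K}\,\hat{\alpha}_3\}$, and observing that the $\tfrac{\Phi(x^0)-\Phi^\star}{\alpha K}$ and $\tfrac{\alpha\tilde{\sigma}_f^2}{n}$ terms each scale like $\tfrac{1}{\sqrt{nK}}$ when $\sqrt{n/K}\,\hat{\alpha}_3$ is active (yielding the $\hat{\alpha}_3+\hat{\alpha}_3^{-1}$ factor) and like $\tfrac{1}{\min(\hat{\alpha}_1,\hat{\alpha}_2)K}$ otherwise, produces the stated bound with the residual $b^2$.

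The main obstacle I expect is the two-level coupling: controlling $\mathbb{E}\|y^{k+1}-y^*(x^k)\|^2$ so that it is both small enough not to degrade the rate and has a variance that itself decays with $n$, since otherwise the outer linear speedup from \Cref{le:bounded_hyper_gradient_over_n} would be bottlenecked by inner client drift. This is exactly where minibatch SGD outperforms FedAvg and FedLin, and where the joint tuning of $\alpha_k$ and $\beta_{k,t}$ must be verified carefully so that all step-size-dependent error terms in the Lyapunov recursion remain compatible.
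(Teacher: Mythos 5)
Your overall architecture matches the paper's: a descent lemma on the smooth objective, the bias/variance split of the aggregated hypergradient via Propositions~\ref{le:bounded_matrix}--\ref{le:bounded_hyper_gradient_over_n} (with the $\tilde{\sigma}_f^2/n$ term as the source of the speedup), an inner-loop contraction, and a Lyapunov function $f(x^k)+c\|y^k-y^*(x^k)\|^2$ with $\beta_{k,t}=\Theta(\alpha_k/T)$ tuned to make the $y$-error telescope. However, two steps that you gloss over are exactly where the argument would fail as written.

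First, the inner-loop variance. You write that each minibatch step injects noise of order $\sigma_{g,1}^2/(nS)+\sigma_g^2$, i.e.\ the client-heterogeneity term $\sigma_g^2$ does not decay with $n$. That is what a naive bound gives, but it is not enough: carried through the Lyapunov recursion with $\beta_k=\bar\beta\alpha_k/T$, an un-divided $\sigma_g^2$ produces a term of order $\bar\beta^2\alpha^2\sigma_g^2/(ST)=\mathcal{O}(n/K)$ after telescoping, which is not dominated by $1/\sqrt{nK}$ unless $n^3\lesssim K$, so the stated theorem would not follow. The paper's \Cref{le:error_FEDINN_pre} obtains $\tfrac{4(\sigma_{g,1}^2+\sigma_g^2)}{nS}$ — heterogeneity included — by centering the stochastic gradients at the lower-level optimum $y^*(x^k)$ and showing that the cross-client terms $\mathbb{E}\langle \nabla_y g_{c_i}(x^k,y^*(x^k);\cdot),\nabla_y g_{c_j}(x^k,y^*(x^k);\cdot)\rangle$ vanish, using uniform client sampling together with the optimality condition $\sum_p\nabla_y g_p(x^k,y^*(x^k))=0$. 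This cancellation is a genuinely needed idea, not a routine bound, and your sketch omits it.

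Second, the drift coupling $\|y^*(x^{k+1})-y^*(x^k)\|\le L_y\alpha_k\|h\|$ followed by a Young split is the naive route and it leaves a non-vanishing error floor. To keep the coefficient of $\|y^{k+1}-y^*(x^k)\|^2$ telescoping against the inner contraction $(1-\beta_{k,t}\mu_g)^T\approx 1-\bar\beta\mu_g\alpha_k$, the Young parameter on the cross term $2\langle y^{k+1}-y^*(x^k),\,y^*(x^k)-y^*(x^{k+1})\rangle$ must be $\mathcal{O}(\alpha_k)$; the reciprocal then turns $L_y^2\alpha_k^2\,\mathbb{E}\|h\|^2$ into $\mathcal{O}(\alpha_k)\big(\|\overline{\mathcal{H}}\|^2+\tilde{\sigma}_f^2/n\big)$, and the $\mathcal{O}(\alpha_k)\tilde{\sigma}_f^2/n$ part does not average out: after dividing by $\sum_k\alpha_k$ it leaves a constant $\Theta(\tilde{\sigma}_f^2/n)$ residual, so you would only prove convergence to a neighborhood. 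The paper's \Cref{le:error_FEDINN_post} avoids this by expanding $y^*$ to second order (the $L_{yx}$ constant of \Cref{le:lipschitz_y}): the linear part $\nabla y^*(x^k)(x^{k+1}-x^k)$ has conditional mean $-\alpha_k\nabla y^*(x^k)\overline{\mathcal{H}}$ and therefore contributes no hypergradient variance, while the quadratic remainder carries the $\tilde{\sigma}_f^2/n$ only at order $\alpha_k^2$ (the $a_3(\alpha_k,n)=\mathcal{O}(\alpha_k^2/n)$ coefficient), which does vanish upon averaging. Without this refinement your potential-function recursion cannot close to the claimed rate.
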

\Cref{thm:full_worker} shows that for any given inner loop $T$, with a proper choice of the step sizes $\alpha_k, \beta_{k,t}$ and hyperparameters, the proposed FedMBO algorithm converges with a sub-linear rate. Moreover, the major term in the error bound $ \mathcal{O}\Big(\frac{1}{\sqrt{nK}})$ has a linear speedup w.r.t.~the number $n$ of the participating clients.
\begin{remark}
    Our theoretical analysis is mainly conducted on the case of partial client participation, i.e. $n<m$. For the full clients participation scenario, the analysis is easier and similar results (constants slightly different) can be obtained by following the proof steps in \Cref{appen:main}. 
\end{remark}
\begin{corollary}\label{corollary}
Under the same conditions as in~\Cref{thm:full_worker}, if we set $N=\Omega(\kappa_g \log K)$ and $ST=\Omega(\kappa^4_g)$, then 
\begin{align*}
    \frac{1}{K} \sum_{k=0}^{K-1} \mathbb{E} \left[\left\|\nabla f(x^k)\right\|^2\right] =\mathcal{O}\left(\frac{\kappa_g^{5/2}}{\sqrt{nK}}+\frac{\kappa_g^3}{K}+\frac{\kappa_g^{7/2}\sqrt{n}}{K^{3/2}}\right).
\end{align*}
In addition, we need $K=\mathcal{O}(\kappa^5_g\epsilon^{-2}/n)$ to achieve an $\epsilon$-accurate stationary point.
\end{corollary}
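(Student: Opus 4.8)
The plan is to derive \Cref{corollary} purely as a specialization of \Cref{thm:full_worker}: I substitute the prescribed choices $N=\Omega(\kappa_g\log K)$ and $ST=\Omega(\kappa_g^4)$ into the three-term bound and track how the condition number $\kappa_g$ propagates through every constant. First I would dispose of the bias term. Writing $b=\kappa_g\ell_{f,1}(1-1/\kappa_g)^N$ and using $(1-1/\kappa_g)^N\le e^{-N/\kappa_g}$, the choice $N\ge c\kappa_g\log K$ gives $(1-1/\kappa_g)^N\le K^{-c}$, so $b^2=\mathcal{O}(\kappa_g^2\ell_{f,1}^2K^{-2c})$. Taking $c\ge 1$ makes $b^2$ decay faster than every surviving term, so it is absorbed and contributes nothing to the final rate.

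The core of the argument is the $\kappa_g$-accounting of the remaining two terms. Here I would use the second-moment bounds $\tilde{\sigma}_f^2=\tilde{D}_f^2=\mathcal{O}(\kappa_g^2)$ recorded in \Cref{le:bounded_hyper_gradient,le:bounded_hyper_gradient_over_n}, together with the standard bilevel estimate that the hyperobjective is $L_\Phi$-smooth with $L_\Phi=\mathcal{O}(\kappa_g^3)$, which governs the admissible step sizes. The capped constants satisfy $\min(\hat{\alpha}_1,\hat{\alpha}_2)=\Omega(L_\Phi^{-1})=\Omega(\kappa_g^{-3})$, turning the middle term into $\frac{1}{\min(\hat{\alpha}_1,\hat{\alpha}_2)K}=\mathcal{O}(\kappa_g^3/K)$. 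For the stochastic term, balancing $\frac{1}{\alpha_k K}$ against $\frac{L_\Phi\alpha_k\tilde{\sigma}_f^2}{n}$ fixes the effective factor $\hat{\alpha}_3^{-1}\asymp\sqrt{L_\Phi\tilde{\sigma}_f^2}=\mathcal{O}(\kappa_g^{5/2})$, so that $\frac{\hat{\alpha}_3+\hat{\alpha}_3^{-1}}{\sqrt{nK}}=\mathcal{O}(\kappa_g^{5/2}/\sqrt{nK})$. These two substitutions already reproduce the first two terms of the claimed rate.

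For the third term $\frac{\kappa_g^{7/2}\sqrt{n}}{K^{3/2}}$, I expect it to come from the non-averaged, second-order ($\alpha_k^2$-scale) contribution of the smoothness descent inequality, namely the piece that couples the inner-loop drift to the step size through the schedule $\beta_{k,t}\propto\alpha_k/T$ and carries an $L_\Phi^2$ coefficient. Concretely, after dividing by $\alpha_k K$ to isolate $\frac{1}{K}\sum_k\mathbb{E}\|\nabla\Phi(x^k)\|^2$, a contribution of order $L_\Phi^2\alpha_k^2$ appearing only once (rather than summed over $k$) reduces to $\frac{L_\Phi^2\alpha_k}{K}$; in the large-$K$ regime $\alpha_k=\sqrt{n/K}\,\hat{\alpha}_3$ with $\hat{\alpha}_3=\mathcal{O}(\kappa_g^{-5/2})$ this equals $\mathcal{O}(\kappa_g^{7/2}\sqrt{n}/K^{3/2})$. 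The role of $ST=\Omega(\kappa_g^4)$ is precisely to push the geometric-contraction part of the minibatch-SGD inner error $\mathbb{E}\|y^{k+1}-y^*(x^k)\|^2$ below this residual level, so only the $\alpha_k$-scaled residual survives. Collecting the three surviving terms gives the stated rate.

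Finally, for the complexity I would note that for small $\epsilon$ the leading term is $\kappa_g^{5/2}/\sqrt{nK}$, since the other two decay faster in $K$; imposing $\mathbb{E}\|\nabla\Phi(x)\|^2\le\epsilon$ and solving $\kappa_g^{5/2}/\sqrt{nK}\le\epsilon$ yields $K=\mathcal{O}(\kappa_g^5\epsilon^{-2}/n)$, after a one-line check that this $K$ also drives the remaining two terms below $\epsilon$ whenever $n\epsilon\le\kappa_g^2$. The main obstacle is the $\kappa_g$-bookkeeping of the second and third steps: verifying that $L_\Phi=\mathcal{O}(\kappa_g^3)$ propagates consistently into all of $\hat{\alpha}_1,\hat{\alpha}_2,\hat{\alpha}_3$ and into the coupled lower-level step size $\beta_{k,t}$, and isolating the $L_\Phi^2\alpha_k^2$ residual cleanly enough that it produces exactly the $\sqrt{n}/K^{3/2}$ dependence without leaking into, or inflating the constant of, the leading $1/\sqrt{nK}$ term.
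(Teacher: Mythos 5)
Your proposal follows essentially the same route as the paper's proof: specialize the constants of \Cref{thm:full_worker} (with $\eta=M_f/L_y$, so $\hat{\alpha}_1=\mathcal{O}(\kappa_g^{-3})$, $\hat{\alpha}_2=\mathcal{O}(T\kappa_g^{-3})$, $\bar{\beta}=\mathcal{O}(\kappa_g^4)$), kill the bias with $N=\Omega(\kappa_g\log K)$, balance the variance term to get $\hat{\alpha}_3=\mathcal{O}(\kappa_g^{-5/2})$, and use $ST=\Omega(\kappa_g^4)$ to tame the inner-loop error; your $\kappa_g$-bookkeeping and the complexity conversion all agree with the paper. The one imprecision is the provenance of the $\kappa_g^{7/2}\sqrt{n}/K^{3/2}$ term: in the paper it is \emph{not} a one-off $L_\Phi^2\alpha_k^2$ contribution but the $\alpha_k^4$-coefficient piece $\frac{\eta^2\bar{\beta}^2L_{yx}\tilde{D}_f^2}{nST}\alpha_k^4$ (from $a_2(\alpha_k,n)$ multiplying the inner-loop variance floor $T\beta_k^2(\sigma_{g,1}^2+\sigma_g^2)/(nS)$), summed over $k$ and normalized by $\sum_k\alpha_k$ — your arithmetic lands on the same order only because $\kappa_g^{11}\hat{\alpha}_3^2=\kappa_g^6=L_\Phi^2$, so you would need to redo this step against the actual telescoped bound rather than the heuristic you describe.
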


To achieve $\epsilon$-optimal solution, the samples we require in $\xi$ and $\theta$ are $\mathcal{O}(\kappa_g^9\epsilon^{-2})$ and $\mathcal{O}(\kappa_g^5\epsilon^{-2})$ respectively. Compared with FedNest~\citep{tarzanagh2022fednest} in the non-i.i.d.~setting, our complexity has the same dependence on $\kappa$ and $\epsilon$, but a better dependence on $n$ due to the linear speedup. As far as we know, this is the first linear speedup result for non-i.i.d.~federated bilevel optimization.



\begin{figure*}[t]
\vspace{-7pt}
    \centering
     {\includegraphics[scale=0.33]{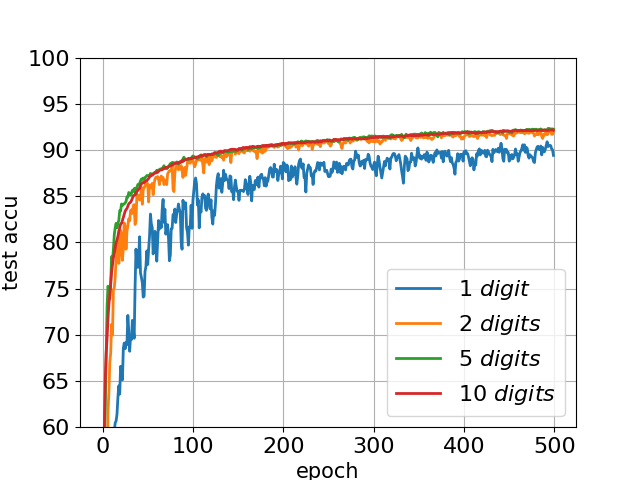}}
     {\includegraphics[scale=0.33]{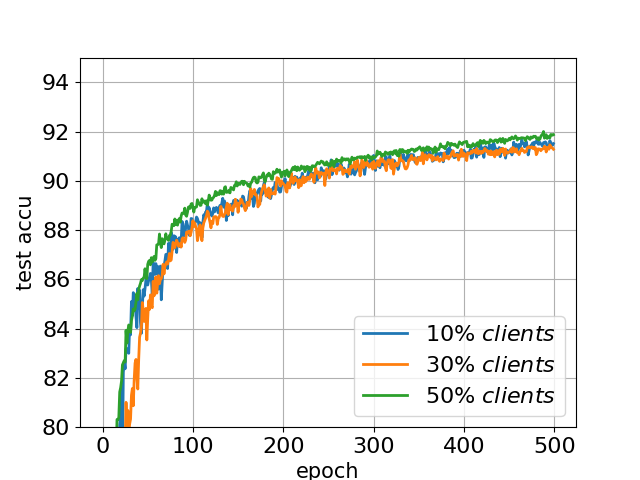}}
     {\includegraphics[scale=0.33]{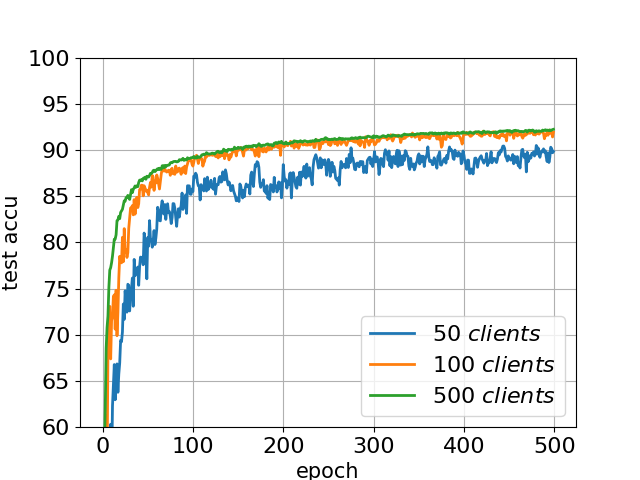}}
     {\includegraphics[scale=0.33]{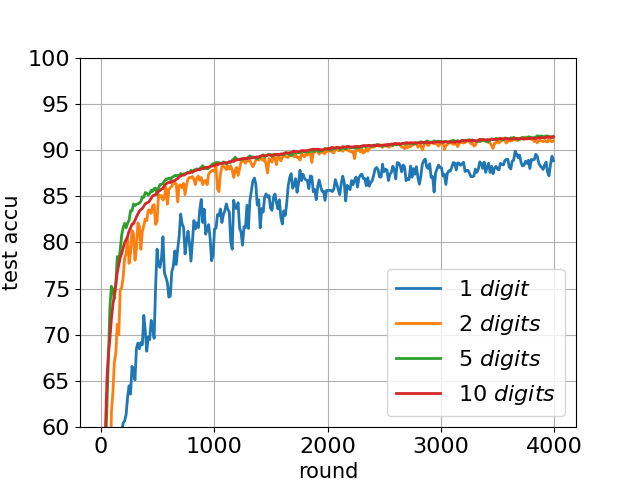}}
     {\includegraphics[scale=0.33]{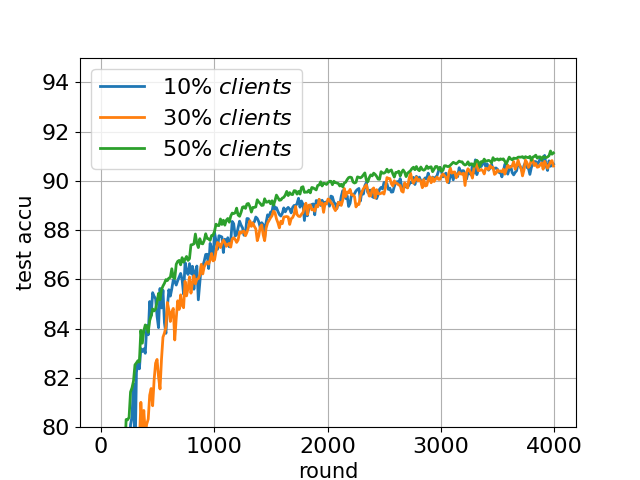}}
     {\includegraphics[scale=0.33]{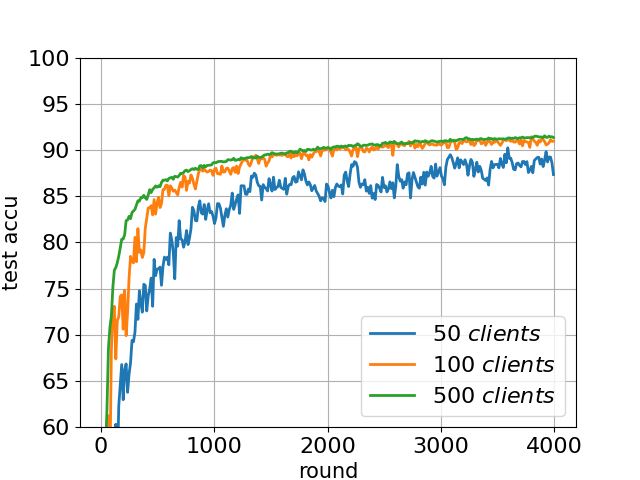}}
    \caption{\textbf{Left column:} Comparison among different levels of heterogeneity. \textbf{Middle column:} Comparison between different numbers of total clients. \textbf{Right column:} Comparison among different sampling rates. The first row plots the test accuracy against the epoch. The second plots the test accuracy against the number of communication rounds.}\label{fig:casestudy}
\end{figure*}

\begin{figure*}[t]
\vspace{-7pt}
    \centering
     {\includegraphics[scale=0.33]{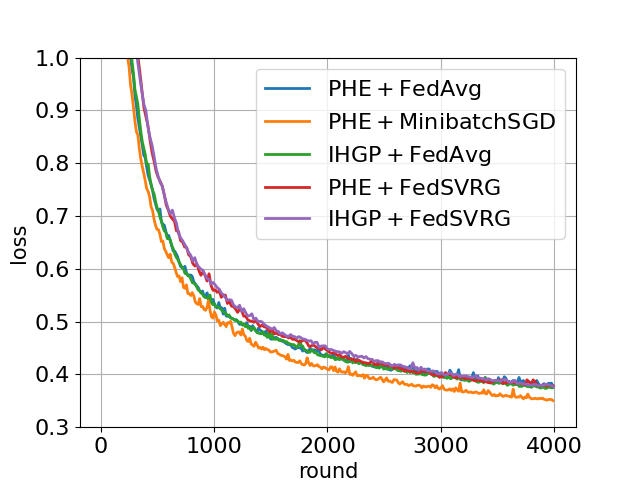}}
     {\includegraphics[scale=0.33]{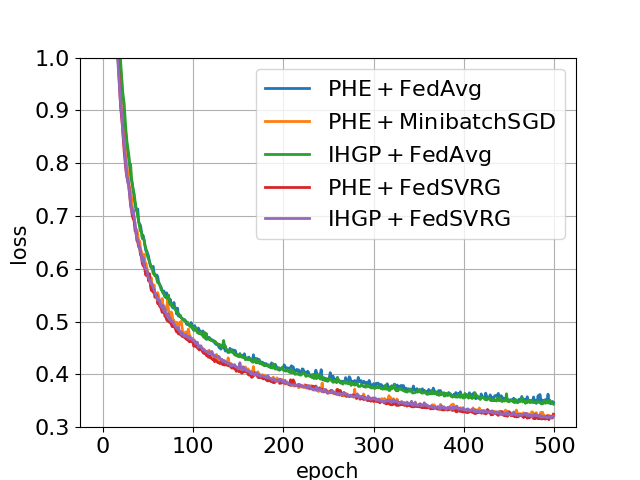}}
     {\includegraphics[scale=0.33]{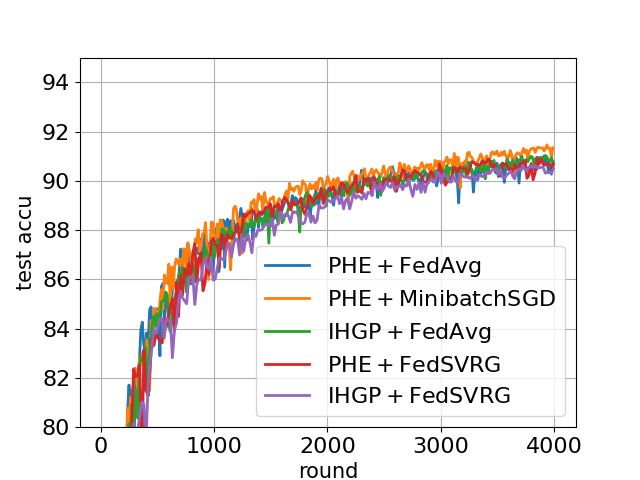}}
    \caption{Comparison between our PHE 
    with IHGP~\citep{tarzanagh2022fednest}
    under different lower-level optimizers.}.\label{fig:fncompare}\vspace{-15pt}
\end{figure*}

\section{Experiments}\label{exp}

In this section, we conduct experiments on hyper-representation, which is an important problem in multi-task machine learning, to validate our theoretical results.  
We focus on the hyper-representation problem in the federated setting, which can be formulated as
\begin{align*}
    &\min_{\phi} \mathcal{L}_{\mathcal{D}_v} (\phi, \omega^*) = \frac{1}{m} \sum_{i=1}^m \frac{1}{|\mathcal{D}_v^i|} \sum_{\xi \in \mathcal{D}_v^i} \mathcal{L} (\phi, \omega ^*; \xi)  \nonumber\\
    &\text{s.t. } \omega ^* = \argmin_{\omega} \frac{1}{m} \sum_{i=1}^m \frac{1}{|\mathcal{D}_t^i|} \sum_{\zeta \in \mathcal{D}_t^i} \mathcal{L} (\phi, \omega ; \zeta),
\end{align*}
where $\mathcal{D}_t^i$ and $\mathcal{D}_v^i$ are 
the training and the validation datasets respectively. 
Specifically, the upper-level problem learns the shared hyper feature representations using the validation data, and  the lower-level objective learns the prediction head for each client on the training data. In all experiments, we use a multi-layer perceptron (MLP) with 2 linear layers and 1 ReLU activation layer as our model architecture and focus on the heterogeneous case with non-i.i.d.~datasets. All experiments  are implemented in Python 3.7 on a Linux server with an Nvidia GeForce RTX 2080ti GPU.

\subsection{Case Studies}
In this section, we conduct experiments on several case studies to demonstrate the efficiency of our proposed algorithm. We first study the impact of heterogeneity in each client's dataset. We fix the client sampling ratio to $10\%$, and the number of clients to be $100$ and sample the dataset in a digit-based manner. In particular, the whole MNIST dataset is split into 10 subsets, where each subset contains all images with the same digit. The data in each client is sampled from a certain number of subsets. In a 2-digit case, for each client, we first randomly pick 2 digits, and then sample data from the images with these two digits. Note that the 10-digit case is equivalent to the homogeneous case. In this way, the number of digits measures the degree of heterogeneity. The result is summarized in the left column of~\Cref{fig:casestudy}. The proposed algorithm performs the worst in the 1-digit case with the highest data heterogeneity, and the performance is improved as we increase the number of digits due to the reduced data heterogeneity. This demonstrates the negative impact of data heterogeneity on the convergence performance. 


Second, we study the impact of different client sampling ratios. We fix the 2-digit sampling strategy for each client and the total number of clients to be $100$. From the middle column of~\Cref{fig:casestudy},it is seen that the case of $50\%$ client sampling ratio performs the best. Therefore, increasing the sampling ratio helps the performance of our algorithm.

Finally, we test the impact of different numbers of total clients. We fix a 2-digit sampling strategy for each client and the client sampling ratio to be $10\%$. We select $n \in \{50, 100, 500\}$ for the test. As shown in the right column of~\Cref{fig:casestudy}, the performance of our proposed algorithm becomes better as we increase the number of clients. 

\subsection{Comparison with FedNest}
We compare our approaches with FedNest \citep{tarzanagh2022fednest} in the non-i.i.d.~setting. We notice that \citealt{tarzanagh2022fednest} also proposed a Light FedNest (LFedNest) to reduce the communication rounds. However, LFedNest diverges in some of our non-i.i.d experiments and performs worse than the FedNest. So we focus on the comparison of FedNest and our proposed algorithm only. Two major components of the FedNest algorithm are IHGP for estimating the hypergradient and FedSVRG (or FedLin) for solving the lower-level problem. We compare the performance among different pairs of PHE, IHGP, and MinibatchSGD, FedSVRG, FedAvg. In this case, we set the number of total clients to 100 and the sampling ratio to be $10\%$. For the dataset of each client, we first sort the MNIST dataset according to their labels and then equally split it into 100 subsets and assign one subset to each client. In this way, we guarantee a high-level heterogeneity among all the clients. We set $T = 5$ for all cases and fine-tune the step sizes so that each setting achieves its best performance.

In Figure \ref{fig:fncompare}, we plot the loss and test accuracy against epoch and communication round respectively. The left figure plots the loss against the communication round. From the left figure, we conclude that among all the settings, the proposed PHE + MinibatchSGD converges the fastest. The middle figure plots the loss against data epochs and shows that the MinibatchSGD for the lower-level problem achieves similar performance to FedSVRG and both are better than the FedAvg Algorithm. The right figure shows that PHE + MinibatchSGD achieves the best test accuracy among all algorithms.

\section{Conclusion} 
This paper studies the federated bilevel optimization problem in the presence of data heterogeneity, and proposes a novel federated bilevel algorithm named FedMBO. We show that FedMBO is flexible with partial client participation and achieves a linear speedup for convergence. Numerical experiments are conducted to demonstrate the advantages of our proposed algorithms. We anticipate that our theoretical results and the proposed hypergradient estimator can be applied to other distributed scenarios such as decentralized bilevel optimization.






\bibliographystyle{ref_style.bst}
\bibliography{ref.bib}

\newpage
\appendix
\onecolumn

\noindent{\Large\bf Supplementary Materials}

\section{Supporting Lemmas} 
The following two lemmas are commonly used in the previous literature on (federated) bilevel optimization. We refer to the corresponding works for detailed proofs. 
\begin{lemma}\label{le:lipshitz}([\citealt{ghadimi2018approximation}, Lemma 2.2]) Under Assumptions~\ref{ass:lipschitz} and~\ref{ass:strong_convex}, we have 
\begin{align}
    ||\nabla \Phi(x_1) -\nabla \Phi(x_2)||&\leq L_f ||x_1 - x_2||,\label{eq:f_lip}\nonumber\\
    ||y^*(x_1)-y^*(x_2)||&\leq L_y ||x_1-x_2||, \nonumber
\end{align}
where 
\begin{align}
    L_f :=& l_{f,1}+\frac{l_{g,1}(l_{f,1}+M_f)}{\mu_g}+\frac{l_{f,0}}{\mu_g}(l_{g,2}+\frac{l_{g,1}l_{g,2}}{\mu_g})=\mathcal{O}(\kappa_g^3), 
    \nonumber\\
    L_y :=& \frac{l_{g,1}}{\mu_g}=\mathcal{O}(\kappa_g).
    \nonumber
\end{align}
For all $i\in [m]$, we have 
\begin{align*}
    ||\overline{\nabla} f_i(x_1, y)-\overline{\nabla} f_i(x_1,y^*(x_1))|| &\leq M_f ||y-y^*(x_1)||,\\
    ||\overline{\nabla}f_i(x_1,y)-\overline{\nabla} f_i(x_2,y)|| &\leq M_f||x_1-x_2||,
\end{align*}
where the constant $M_f$ is given by 
\begin{align}
    M_f:=l_{f,1}+\frac{l_{g,1}l_{f,1}}{\mu_g}+\frac{l_{f,0}}{\mu_g}(l_{g,2}+\frac{l_{g,1}l_{g,2}}{\mu_g})=\mathcal{O}(\kappa_g^2)
    \nonumber
\end{align}
and $\overline{\nabla} f_i$ is defined as 
\begin{equation}
    \overline{\nabla} f_i(x,y) := \nabla_x f_i(x,y)- \nabla^2_{xy} g(x,y)[\nabla^2_{yy} g(x,y)]^{-1}\nabla_y f_i(x,y).
    \nonumber
\end{equation}
\end{lemma}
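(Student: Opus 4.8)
The plan is to prove the three groups of inequalities in the order that respects their logical dependence: first the Lipschitz continuity of the lower-level minimizer $y^*$, then the joint smoothness of the surrogate $\overline{\nabla} f_i$ in each argument, and finally the smoothness of $\nabla \Phi$ itself. Since the statement is quoted from [\citealt{ghadimi2018approximation}, Lemma 2.2], the proposal reproduces the structure of that argument; the precise constants follow the same bookkeeping.

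\textbf{Step 1 (Lipschitzness of $y^*$).} Because $y^*(x)$ minimizes the $\mu_g$-strongly convex map $g(x,\cdot)$, it satisfies the stationarity condition $\nabla_y g(x, y^*(x)) = 0$. Differentiating implicitly in $x$ gives $\nabla^2_{yx} g(x, y^*(x)) + \nabla^2_{yy} g(x, y^*(x))\, \nabla y^*(x) = 0$, hence $\nabla y^*(x) = -[\nabla^2_{yy} g(x, y^*(x))]^{-1} \nabla^2_{yx} g(x, y^*(x))$. Assumption~\ref{ass:strong_convex} yields $\|[\nabla^2_{yy} g]^{-1}\| \le 1/\mu_g$ and Assumption~\ref{ass:lipschitz} yields $\|\nabla^2_{yx} g\| \le \ell_{g,1}$, so $\|\nabla y^*(x)\| \le \ell_{g,1}/\mu_g =: L_y$. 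Integrating the gradient bound along the segment joining $x_1$ and $x_2$ gives the claimed Lipschitz inequality.

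\textbf{Step 2 (Smoothness of the surrogate).} Recalling that $\overline{\nabla} f_i$ is the sum of $\nabla_x f_i$ and the product $-\nabla^2_{xy} g [\nabla^2_{yy} g]^{-1}\nabla_y f_i$, I would bound $\overline{\nabla} f_i(z_1) - \overline{\nabla} f_i(z_2)$ by writing the difference of the product as a telescoping sum over its three factors, each term changing only one factor while the others are frozen. The ingredients are the uniform bounds $\|\nabla_y f_i\| \le \ell_{f,0}$, $\|\nabla^2_{xy} g\| \le \ell_{g,1}$, $\|[\nabla^2_{yy} g]^{-1}\| \le 1/\mu_g$, together with the Lipschitz constants of Assumption~\ref{ass:lipschitz}. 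The only nonroutine term is the Lipschitzness of the inverse Hessian, which I would control with the resolvent identity $A^{-1} - B^{-1} = A^{-1}(B-A)B^{-1}$, giving $\|[\nabla^2_{yy} g(z_1)]^{-1} - [\nabla^2_{yy} g(z_2)]^{-1}\| \le (\ell_{g,2}/\mu_g^2)\|z_1 - z_2\|$. Collecting the three contributions produces the constant $M_f$, and since the estimate is uniform in the frozen argument it delivers both the Lipschitzness in $y$ and the Lipschitzness in $x$ simultaneously.

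\textbf{Step 3 (Smoothness of $\nabla \Phi$).} By Lemma~\ref{le:implicitfunction}, $\nabla \Phi(x) = \overline{\nabla} f(x, y^*(x))$. I would split $\nabla \Phi(x_1) - \nabla \Phi(x_2)$ into a piece controlled by the $x$-Lipschitzness of $\overline{\nabla} f$ evaluated at the common point $y^*(x_1)$, and a piece controlled by its $y$-Lipschitzness composed with Step 1. To recover the sharper stated constant $L_f = \mathcal{O}(\kappa_g^3)$ rather than the crude $M_f(1+L_y)$, I would instead run the term-by-term bound directly on the explicit hypergradient formula, again invoking the resolvent identity and the uniform factor bounds. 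The main obstacle throughout is Step 2: simultaneously tracking all three factors in $\nabla^2_{xy} g [\nabla^2_{yy} g]^{-1}\nabla_y f$ and, in particular, the Lipschitz continuity of the matrix inverse, which is exactly where the $1/\mu_g^2$ enters and hence where the $\kappa_g^2$ scaling of $M_f$ and the $\kappa_g^3$ scaling of $L_f$ originate; the remaining manipulations are triangle inequalities and substitution of the assumed bounds.
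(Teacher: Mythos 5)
Your proposal is correct and follows essentially the same route as the paper, which simply defers to Lemma 2.2 of \citealt{ghadimi2018approximation}; your Steps 1--3 reproduce that standard argument (implicit differentiation for $L_y$, a telescoping product bound with the resolvent identity for $M_f$, and composition for $L_f$). One small simplification: the ``crude'' combination you worry about in Step 3 already gives the stated constant, since $L_f = M_f(1+L_y)$ exactly, so no sharper term-by-term argument on the hypergradient formula is needed.
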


\begin{proof}[\bf Proof of \Cref{le:lipshitz}]
The proof is similar to Lemma 2.2 in \citealt{ghadimi2018approximation}.
\end{proof}

\begin{lemma}\label{le:lipschitz_y}([\citealt{chen2021closing}, Lemma 2])
Under Assumptions~\ref{ass:lipschitz} to~\ref{ass:sto_sample}, we have 
\begin{align*}
    ||\nabla y^*(x_1)-\nabla y^*(x_2)||&\leq L_{yx} ||x_1-x_2||,
\end{align*}
where the constant $L_{yx}$ is given by 
\begin{align}
    L_{yx}:=\frac{l_{g,2}+l_{g,2}L_y}{\mu_g}+\frac{l_{g,1}}{\mu_g^2}(l_{g,2}+l_{g,2}L_y)=\mathcal{O}(\kappa_g^3).\nonumber
\end{align}
\end{lemma}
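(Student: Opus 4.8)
The plan is to derive a closed form for the Jacobian $\nabla y^*(x)$ from the implicit function theorem and then control the difference of two such Jacobians by a standard add-and-subtract argument. First I would invoke the first-order optimality condition $\nabla_y g(x, y^*(x)) = 0$, which holds for every $x$ under Assumption~\ref{ass:strong_convex}. Since Assumption~\ref{ass:lipschitz} makes $g$ twice continuously differentiable and Assumption~\ref{ass:strong_convex} makes $\nabla^2_{yy} g$ invertible with $[\nabla^2_{yy}g]^{-1} \preceq \mu_g^{-1} I$, the implicit function theorem applies; differentiating the optimality condition in $x$ yields
\[
\nabla y^*(x) = -\big[\nabla^2_{yy} g(x, y^*(x))\big]^{-1}\nabla^2_{yx} g(x, y^*(x)),
\]
where $\nabla^2_{yx} g = (\nabla^2_{xy} g)^\top$. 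Abbreviating $A(x) := \nabla^2_{yy} g(x, y^*(x))$ and $B(x) := \nabla^2_{yx} g(x, y^*(x))$, the goal reduces to bounding $\|A(x_1)^{-1} B(x_1) - A(x_2)^{-1} B(x_2)\|$.

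Next I would split this difference as
\[
A(x_1)^{-1} B(x_1) - A(x_2)^{-1} B(x_2) = A(x_1)^{-1}\big(B(x_1) - B(x_2)\big) + \big(A(x_1)^{-1} - A(x_2)^{-1}\big) B(x_2),
\]
and bound the two pieces separately using the following ingredients: strong convexity gives $\|A(x)^{-1}\| \le \mu_g^{-1}$; boundedness of the cross Hessian (from $\nabla g$ being $l_{g,1}$-Lipschitz) gives $\|B(x)\| \le l_{g,1}$; and the $l_{g,2}$-Lipschitzness of $\nabla^2 g$ together with the bound $\|y^*(x_1) - y^*(x_2)\| \le L_y\|x_1 - x_2\|$ from \Cref{le:lipshitz} gives, through the composition $x \mapsto (x, y^*(x))$,
\[
\max\{\|A(x_1) - A(x_2)\|, \|B(x_1) - B(x_2)\|\} \le l_{g,2}(1 + L_y)\|x_1 - x_2\|.
\]
For the second piece I would additionally apply the resolvent identity $A(x_1)^{-1} - A(x_2)^{-1} = A(x_1)^{-1}(A(x_2) - A(x_1))A(x_2)^{-1}$, so that $\|A(x_1)^{-1} - A(x_2)^{-1}\| \le \mu_g^{-2}\|A(x_1) - A(x_2)\|$.

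Combining, the first piece is at most $\mu_g^{-1} l_{g,2}(1 + L_y)\|x_1 - x_2\|$ and the second at most $\mu_g^{-2} l_{g,1} l_{g,2}(1 + L_y)\|x_1 - x_2\|$; adding these reproduces exactly the claimed constant $L_{yx} = \frac{l_{g,2} + l_{g,2}L_y}{\mu_g} + \frac{l_{g,1}}{\mu_g^2}(l_{g,2} + l_{g,2}L_y) = \mathcal{O}(\kappa_g^3)$. The computation is essentially routine once the closed form is in hand; the only real care is in the bookkeeping of the chain rule, namely tracking the extra $(1 + L_y)$ factor produced because the Hessians are evaluated along the curve $(x, y^*(x))$ rather than at a fixed $y$. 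This is precisely where \Cref{le:lipshitz} enters, so I would establish that first-order Lipschitz bound first (as the paper does) and then treat the present lemma as its corollary.
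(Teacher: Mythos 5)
Your proof is correct and matches the standard argument: the paper itself does not spell out a proof but simply defers to Lemma 2 of \citealt{chen2021closing}, and your derivation via the implicit function theorem, the splitting $A_1^{-1}B_1-A_2^{-1}B_2=A_1^{-1}(B_1-B_2)+(A_1^{-1}-A_2^{-1})B_2$, and the resolvent identity is exactly that argument, reproducing the constant $L_{yx}$ term by term. No gaps.
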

\begin{proof}[\bf Proof of \Cref{le:lipschitz_y}]
The proof is similar to Lemma 2 in \citealt{chen2021closing}.
\end{proof}

\section{Proof of Proposition in~\Cref{mainresults}}\label{prop:appendx}

\begin{proof}[\bf Proof of \Cref{le:bounded_matrix}]
    The independency of $c_i^0$, $c_i^{\max\{N_i\}+1}$, $c_i^{\ell}$, $\ell=1,...,n$, $N_i$ and their data points sample is guaranteed based on the algorithmic design in~\Cref{alg:PHE_n}. Therefore, we have 
    \begin{align}
    &\overline{\mathcal{H}}(x^k,y^{k+1}):=\mathbb{E}\left[\mathcal{H}_i(x^k,y^{k+1})|\mathcal{F}^k\right]
    \nonumber 
    \\
     =& \nabla_x f(x^k, y^{k+1})- \nabla^2_{xy} g(x^k, y^{k+1})\mathbb{E}\bigg[\frac{N}{l_{g,1}}\prod_{l=1}^{N_i}\Big(I - \frac{1}{\ell_{g, 1}}\nabla_{yy}^2 g_{C_i^l}(x^k, y^{k+1};\zeta^{i, l})\Big)\Big|\mathcal{F}^k\bigg]\nabla_y f(x^k, y^{k+1}). \nonumber 
\end{align}
From the definition of $\overline{\nabla}f$, we have 
\begin{align}
    &\left\|\overline{\mathcal{H}}(x^k,y^{k+1})-\overline{\nabla}f(x^k,y^{k+1}))\right\|\nonumber
    \\
    &\leq \left\|\nabla^2_{xy} g(x^k, y^{k+1})\right\|\cdot\left\|\mathbb{E}\bigg[\frac{N}{l_{g,1}}\prod_{l=1}^{N_i}\Big(I - \frac{1}{\ell_{g, 1}}\nabla_{yy}^2 g_{C_i^l}(x^k, y^{k+1};\zeta^{i, l})\Big)\Big|\mathcal{F}^k\bigg]-\left[\nabla^2_{yy}g(x,y)\right]^{-1}\right\| \nonumber
    \\&\quad\times\left\|\nabla_y f(x^k, y^{k+1})\right\|\nonumber
    \\
    &\leq \frac{\ell_{f,0}\ell_{g,1}}{\mu_g}\big(1-\frac{\mu_g}{\ell_g}\big)^N,\nonumber
\end{align}
where we have applied the Assumption~\ref{ass:lipschitz} and~\cref{Neumann_series} to the last inequality. 
Then, the proof is complete. 
\end{proof}

\begin{proof}[\bf Proof of \Cref{le:bounded_hyper_gradient}]
Following Lemma 1 in~\citealt{hong2020two}, we can derive 
\begin{align}
    \mathbb{E}\left[||\mathcal{H}_i(x^k,y^{k+1})-\overline{\mathcal{H}}(x^k,y^{k+1})||^2\right]&\leq \tilde{\sigma}_f,\label{bound_variance_hypergrad}
\end{align}
where $\tilde{\sigma}_f^2:= \sigma_f^2+\frac{3}{\mu_g^2}\left[(\sigma_f^2+\ell_{f,0}^2)(\sigma_{g,2}^2+2\ell_{g,1}^2)+\sigma_f^2\ell_{g,1}^2\right]$. 
From the definition in~\Cref{surrogate_bar_f}, we have 
\begin{align}
    \left\|\overline{\nabla} f(x,y)\right\| &= \left\|\nabla_x f(x,y)-\nabla^2_{xy} g(x,y)[\nabla^2_{yy} g(x,y)]^{-1}\nabla_y f(x,y)\right\|\nonumber
    \\
    & \leq \left\|\nabla_x f(x,y)\right\|+\left\|\nabla^2_{xy} g(x,y)\right\|\left\|[\nabla^2_{yy} g(x,y)]^{-1}\right\|\left\|\nabla_y f(x,y)\right\|\nonumber 
    \\
    &\leq \ell_{f,0}+\frac{\ell_{f,0}\ell_{g,1}}{\mu_g},\label{bar_surro}
\end{align}
where the last inequality comes from Assumptions~\ref{ass:lipschitz} and~\ref{ass:strong_convex}. 

Now we derive the bound of $\mathcal{H}_i(x^k,y^{k+1})$ as follows,
\begin{align}
    \mathbb{E}\left[||\mathcal{H}_i(x^k,y^{k+1})||^2 |\mathcal{F}^k\right]&=\left\|\overline{\mathcal{H}}(x^k,y^{k+1})\right\|^2+\mathbb{E}\left[||\mathcal{H}_i(x^k,y^{k+1})-\overline{\mathcal{H}}(x^k,y^{k+1})||^2|\mathcal{F}^k\right]\nonumber
    \\
    &\leq \left(\left\|\overline{\mathcal{H}}(x^k,y^{k+1})-\overline{\nabla} f(x,y)\right\|+ \left\|\overline{\nabla} f(x,y)\right\|\right)^2+\tilde{\sigma}^2_f\nonumber
    \\
    &\leq \big(\kappa_g \ell_{f,1}\left(\left(\kappa_g -1\right)/\kappa_g\right)^N+\ell_{f,0}+\ell_{g,1}\frac{1}{\mu_g}\ell_{f,0}\big)^2+\tilde{\sigma}^2_f\nonumber
    \\
    &\leq \big(\ell_{f,0}+\frac{\ell_{f,0}\ell_{g,1}}{\mu_g}+\frac{\ell_{f,1}\ell_{g,1}}{\mu_g}\big)^2+\tilde{\sigma}_f^2,\nonumber
\end{align}
where the first inequality is based on the result of~\cref{bound_variance_hypergrad} and the second inequality is based on~\Cref{{le:bounded_matrix}} and~\cref{bar_surro}. 
Then, the proof is complete.
\end{proof}

\begin{proof}[\bf Proof of \Cref{le:bounded_hyper_gradient_over_n}]
{Note that if we choose to sample the clients \textit{with replacement} in~\Cref{alg:PHE_n}, then apparently $\left\{\mathcal{H}_i\right\}$ are pairwise independent random variables (refer to~\Cref{fig:samplingdiagram}). From~\Cref{le:bounded_hyper_gradient}, we have the variances of $\left\{\mathcal{H}_i\right\}_{i=1,...,n}$ are  bounded by a constant $\tilde{\sigma}^2_f$. Therefore, we have}
\begin{subequations}
\begin{align}
    &\mathbb{E}\Big\|\frac{1}{n}\sum_{i=1}^n(\mathcal{H}_i(x^k,y^{k+1})-\overline{\mathcal{H}}(x^k,y^{k+1}))\Big\|^2
    \nonumber 
    \\
    &
    = \frac{1}{n^2}\sum_{i=1}^n \mathbb{E}\left\|\mathcal{H}_i(x^k,y^{k+1})-\overline{\mathcal{H}}(x^k,y^{k+1})\right\|^2\nonumber
    \\
    &+\frac{1}{n^2}\sum_{1\leq i\neq j\leq n} \mathbb{E}\Big[\mathbb{E}\Big\langle \mathcal{H}_i(x^k,y^{k+1})-\overline{\mathcal{H}}(x^k,y^{k+1}),\mathcal{H}_j(x^k,y^{k+1})-\overline{\mathcal{H}}(x^k,y^{k+1})\Big\rangle |\mathcal{F}^k \Big] \nonumber
    \\
    &\leq \frac{\tilde{\sigma}^2_f}{n},
    \nonumber
\end{align}
\end{subequations}
where the last inequality follows because $\mathcal{H}_i-\overline{\mathcal{H}}$ and $\mathcal{H}_j-\overline{\mathcal{H}}$ are independent  conditioning on $\mathcal{F}^k$.  
Then, the proof is complete.
\end{proof}

\section{Convergence Proofs}\label{appen:main}
\begin{proof}[\bf Proof of \Cref{le:implicitfunction}]
This result has been well-known in the literature on bilevel optimization. See, e.g.,~\citealt{ghadimi2018approximation} for its proof. 
\end{proof}

\begin{lemma}\label{le:descent}
Suppose Assumptions~\ref{ass:lipschitz} to~\ref{ass:bounded_var} hold, \Cref{alg:HDMB-Partial} guarantees:
\begin{align}
\mathbb{E}\left[f(x^{k+1})\right]-\mathbb{E}[f(x^k)]
&\leq \alpha_k M_f^2 \mathbb{E}\left[\left\|y^{k+1}-y^{*}(x^k)\right\|^2\right]+(\alpha_k^2 L_f-\frac{\alpha_k}{2})\mathbb{E} \left[\left\| \overline{\mathcal{H}}(x^k,y^{k+1})\right\|^2\right]
\nonumber
\\ &-\frac{\alpha_k}{2} \mathbb{E}\left[\left\|\nabla f(x^k)\right\|^2\right]+\alpha_k b^2 +\frac{\alpha_k^2 L_f \tilde{\sigma}_f^2}{n}.
\nonumber
\end{align}
\end{lemma}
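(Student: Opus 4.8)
The plan is to establish the standard descent inequality for the outer variable $x$ by exploiting the $L_f$-smoothness of $\Phi$ from~\Cref{le:lipshitz}, and then carefully decompose the inner product term using the bias bound (\Cref{le:bounded_matrix}) and the variance bound (\Cref{le:bounded_hyper_gradient_over_n}). First I would write the smoothness inequality
\begin{align}
\mathbb{E}[f(x^{k+1})] - \mathbb{E}[f(x^k)] \leq \mathbb{E}\langle \nabla f(x^k), x^{k+1}-x^k\rangle + \frac{L_f}{2}\mathbb{E}\|x^{k+1}-x^k\|^2, \nonumber
\end{align}
and substitute the update rule $x^{k+1}-x^k = -\alpha_k h$, where $h = \frac{1}{n}\sum_{i=1}^n \mathcal{H}_i(x^k,y^{k+1})$. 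Taking conditional expectation given $\mathcal{F}^k$, the cross term becomes $-\alpha_k \mathbb{E}\langle \nabla f(x^k), \overline{\mathcal{H}}(x^k,y^{k+1})\rangle$ since $\mathbb{E}[h\,|\,\mathcal{F}^k] = \overline{\mathcal{H}}(x^k,y^{k+1})$, and the quadratic term splits via $\mathbb{E}\|h\|^2 = \|\overline{\mathcal{H}}\|^2 + \mathbb{E}\|h - \overline{\mathcal{H}}\|^2$ so that the variance piece is bounded by $\tilde{\sigma}_f^2/n$ from~\Cref{le:bounded_hyper_gradient_over_n}, producing the $\frac{\alpha_k^2 L_f \tilde{\sigma}_f^2}{n}$ term.

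Next I would handle the cross term $-\alpha_k\langle \nabla f(x^k), \overline{\mathcal{H}}\rangle$, which is the crux of the argument. The natural tactic is to insert $\pm\,\overline{\mathcal{H}}$ and use the polarization/Young-type identity $-\langle a, b\rangle = \frac{1}{2}\|a-b\|^2 - \frac{1}{2}\|a\|^2 - \frac{1}{2}\|b\|^2$ with $a = \nabla f(x^k)$ and $b = \overline{\mathcal{H}}(x^k,y^{k+1})$, giving
\begin{align}
-\langle \nabla f(x^k), \overline{\mathcal{H}}\rangle = \tfrac{1}{2}\|\nabla f(x^k) - \overline{\mathcal{H}}\|^2 - \tfrac{1}{2}\|\nabla f(x^k)\|^2 - \tfrac{1}{2}\|\overline{\mathcal{H}}\|^2. \nonumber
\end{align}
This immediately yields the $-\frac{\alpha_k}{2}\|\nabla f(x^k)\|^2$ term and a $-\frac{\alpha_k}{2}\|\overline{\mathcal{H}}\|^2$ term; the latter combines with the $\frac{\alpha_k^2 L_f}{2}\|\overline{\mathcal{H}}\|^2$ coming from the quadratic term to give the coefficient $(\alpha_k^2 L_f - \frac{\alpha_k}{2})$ on $\mathbb{E}\|\overline{\mathcal{H}}\|^2$. (One must track the factor of $\tfrac12$ versus the stated $\frac{L_f}{2}$ carefully, but the bookkeeping is routine.)

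The remaining term $\frac{\alpha_k}{2}\|\nabla f(x^k) - \overline{\mathcal{H}}(x^k,y^{k+1})\|^2$ is where I would spend the most care, since it must be absorbed into the $\alpha_k M_f^2\|y^{k+1}-y^*(x^k)\|^2$ and $\alpha_k b^2$ terms. The key is the triangle inequality
\begin{align}
\|\nabla f(x^k) - \overline{\mathcal{H}}\| \leq \|\nabla f(x^k) - \overline{\nabla} f(x^k,y^{k+1})\| + \|\overline{\nabla} f(x^k,y^{k+1}) - \overline{\mathcal{H}}\|, \nonumber
\end{align}
where the second summand is bounded by $b$ via~\Cref{le:bounded_matrix}. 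For the first summand I would use that $\nabla f(x^k) = \nabla \Phi(x^k) = \overline{\nabla} f(x^k, y^*(x^k))$ together with the Lipschitz property of $\overline{\nabla} f$ in $y$ from~\Cref{le:lipshitz} (constant $M_f$) to bound it by $M_f\|y^{k+1}-y^*(x^k)\|$. Squaring and applying $(a+b)^2 \le 2a^2 + 2b^2$ gives the stated $M_f^2$ and $b^2$ contributions up to the constant factors. The main obstacle is ensuring the constants and the factor-of-two bookkeeping line up exactly with the claimed coefficients (particularly reconciling the Young's-inequality halves with the $M_f^2$ and $b^2$ coefficients as stated), and confirming that $\overline{\nabla} f$ — defined with the per-client aggregated form — indeed satisfies the $M_f$-Lipschitz estimate in the averaged sense required here.
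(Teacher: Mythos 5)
Your proposal is correct and follows essentially the same route as the paper: smoothness of $\Phi$, substitution of the update $x^{k+1}-x^k=-\alpha_k h$, conditioning on $\mathcal{F}^k$ to replace $h$ by $\overline{\mathcal{H}}$, the polarization identity on the cross term, the decomposition of $\|\nabla f(x^k)-\overline{\mathcal{H}}\|$ through $\overline{\nabla}f(x^k,y^{k+1})$ using \Cref{le:bounded_matrix} and the $M_f$-Lipschitzness from \Cref{le:lipshitz}, and \Cref{le:bounded_hyper_gradient_over_n} for the $\tilde{\sigma}_f^2/n$ variance term. Your exact second-moment split $\mathbb{E}\|h\|^2=\mathbb{E}\|\overline{\mathcal{H}}\|^2+\mathbb{E}\|h-\overline{\mathcal{H}}\|^2$ is in fact slightly tighter than the paper's use of $\|a+b\|^2\le 2\|a\|^2+2\|b\|^2$ (giving coefficients $\alpha_k^2L_f/2$ rather than $\alpha_k^2L_f$), so it implies the stated bound, and the averaged $\overline{\nabla}f$ inherits the $M_f$ constant from the per-client bounds in \Cref{le:lipshitz} by the triangle inequality.
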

\begin{proof}[\bf Proof of \Cref{le:descent}]
Based on the smoothness property of the objective function $\Phi$ established in~\Cref{le:lipshitz}, we have 
\begin{align}\label{eq:descent_main}
& \mathbb{E}\left[f(x^{k+1})\right]-\mathbb{E}\left[f(x^k)\right] \nonumber
 \\\leq & \mathbb{E}\left[\left\langle x^{k+1} - x^k, \nabla f(x^k) \right\rangle\right] +\frac{L_f}{2} \mathbb{E}\left[\left\|x^{k+1} -x^k\right\|^2\right] \nonumber
    \\=& -\mathbb{E} \left[\left\langle \frac{1}{n}\sum_{i=1}^n \alpha_k \mathcal{H}_i(x^k,y^{k+1}),\nabla f(x^k)\right\rangle\right] +\frac{L_f}{2}\mathbb{E}\left[\left\|\frac{1}{n}\sum_{i=1}^n \alpha_k \mathcal{H}_i(x^k,y^{k+1})\right\|^2\right]. 
\end{align}
To bound the first term of~\cref{eq:descent_main}, we have 
\begin{align}
    & -\mathbb{E} \bigg[\Big\langle \frac{1}{n}\sum_{i=1}^n \alpha_k \mathcal{H}_i(x^k,y^{k+1}),\nabla f(x^k)\Big\rangle\bigg]
    \nonumber\\
    = & -\mathbb{E}\left[\frac{1}{n}\sum_{i=1}^n \alpha_k \mathbb{E}\left[\langle \mathcal{H}_i(x^k,y^{k+1}),\nabla f(x^k)\rangle | \mathcal{F}^k\right] \right]
    \nonumber
    \\ =& -\mathbb{E}\left[\left\langle \alpha_k \overline{\mathcal{H}}(x^k,y^{k+1}),\nabla f(x^k)\right\rangle\right] 
    \nonumber
    \\ =& -\frac{\alpha_k}{2} \mathbb{E} \left[\left\| \overline{\mathcal{H}}(x^k,y^{k+1})\right\|^2\right] -\frac{\alpha_k}{2} \mathbb{E}\left[\left\| \nabla f(x^k)\right\|^2\right]+\frac{\alpha_k}{2} \mathbb{E} \left[\left\|  \overline{\mathcal{H}}(x^k,y^{k+1})-\nabla f(x^k)\right\|^2\right] 
    \nonumber 
    \\ =& -\frac{\alpha_k}{2} \mathbb{E} \left[\left\| \overline{\mathcal{H}}(x^k,y^{k+1})\right\|^2\right] -\frac{\alpha_k}{2} \mathbb{E}\left[\left\| \nabla f(x^k)\right\|^2\right]
    \nonumber 
    \\ & + \frac{\alpha_k}{2} \mathbb{E}\left[\left\|  \overline{\mathcal{H}}(x^k,y^{k+1})-\overline{\nabla} f(x^k,y^{k+1})+\overline{\nabla} f(x,y^{k+1})-\nabla f(x^k)\right\|^2\right]
    \nonumber 
    \\ \leq & -\frac{\alpha_k}{2} \mathbb{E} \left[\left\| \overline{\mathcal{H}}(x^k,y^{k+1})\right\|^2\right] -\frac{\alpha_k}{2} \mathbb{E}\left[\left\| \nabla f(x^k)\right\|^2\right]
    \nonumber 
    \\ & + \alpha_k\mathbb{E}\left[\left\|\overline{\mathcal{H}}(x^k,y^{k+1})-\overline{\nabla}f(x^k,y^{k+1}))\right\|^2\right] +\alpha_k\mathbb{E}\left[\left\|\overline{\nabla}f(x^k,y^{k+1})-\nabla f(x^k)\right\|^2\right]\nonumber
    \\ \leq & -\frac{\alpha_k}{2} \mathbb{E} \left[\left\| \overline{\mathcal{H}}(x^k,y^{k+1})\right\|^2\right] -\frac{\alpha_k}{2} \mathbb{E}\left[\left\| \nabla f(x^k)\right\|^2\right]+ {\alpha_k b^2} +\alpha_k M_f^2\mathbb{E}\left[\left\|y^{k+1}-y^*(x^k)\right\|^2\right],\nonumber 
\end{align}
where the last inequality is due to ~\Cref{le:lipshitz} and~\Cref{le:bounded_matrix}. The second term of~\cref{eq:descent_main} can be bounded as
\begin{align}
&\frac{L_f}{2} \mathbb{E}\left[\left\| \frac{1}{n}\sum_{i=1}^n \alpha_k \mathcal{H}_i(x^k,y^{k+1})\right\|^2\right]
\nonumber\\
=& \frac{\alpha_k^2 L_f}{2} \mathbb{E}\left[\left\|\frac{1}{n}\sum_{i=1}^n(\mathcal{H}_i(x^k,y^{k+1})-\overline{\mathcal{H}}(x^k,y^{k+1})+\overline{\mathcal{H}}(x^k,y^{k+1}))\right\|^2\right]
\nonumber
\\\leq & \alpha_k^2 L_f \mathbb{E}\left[\left\|\frac{1}{n}\sum_{i=1}^n(\mathcal{H}_i(x^k,y^{k+1})-\overline{\mathcal{H}}(x^k,y^{k+1}))\right\|^2\right]+\alpha_k^2 L_f \mathbb{E} \left[\left\| \overline{\mathcal{H}}(x^k,y^{k+1})\right\|^2\right]
\nonumber
\\ =& {\frac{\alpha_k^2 L_f \tilde{\sigma}_f^2}{n}}+\alpha_k^2 L_f \mathbb{E} \left[\left\| \overline{\mathcal{H}}(x^k,y^{k+1})\right\|^2\right],\nonumber
\end{align}
where we use ~\Cref{le:bounded_hyper_gradient_over_n} in the last equality. Combining the above inequalities yields 
\begin{align}
\mathbb{E}[f(x^{k+1})]-\mathbb{E}[f(x^k)]
&\leq \alpha_k M_f^2 \mathbb{E}\left[\left\|y^{k+1}-y^{*}(x^k)\right\|^2\right]+(\alpha_k^2 L_f-\frac{\alpha_k}{2})\mathbb{E} \left[\left\|  \overline{\mathcal{H}}(x^k,y^{k+1})\right\|^2\right]
\nonumber
\\ &-\frac{\alpha_k}{2} \mathbb{E}\left[\left\|\nabla f(x^k)\right\|^2\right]+\alpha_k b^2 +\frac{\alpha_k^2 L_f \tilde{\sigma}_f^2}{n}.
\nonumber
\end{align}
Then, the proof is complete.
\end{proof}

\begin{lemma}\label{le:error_FEDINN_pre}
Suppose Assumptions~\ref{ass:lipschitz} to~\ref{ass:bounded_global_variance} hold and $0<\beta_{k,t}\leq\frac{1}{2 l_{g,1}}$, the iterates of \Cref{alg:HDMB-Partial} guarantees:
\begin{equation}\label{eq:error_FEDINN_pre}
    \mathbb{E} \left[\left\| y^{k+1}-y^{*}(x^k)\right\|^2\right] \leq \left(\prod_{t=0}^{T-1}(1-\beta_{k,t} \mu_g)\right) \mathbb{E}\left[\left\|y^k-y^*(x^k)\right\|^2\right] +\frac{4(\sigma_{g,1}^2+\sigma_g^2)}{nS}\sum_{t=0}^{T-1}\beta_{k,t}^2.\nonumber
\end{equation}
\end{lemma}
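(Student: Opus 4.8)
The plan is to regard the inner loop, at a fixed outer index $k$, as a standard minibatch SGD run on the map $y\mapsto g(x^k,y)$, which is $\mu_g$-strongly convex (Assumption~\ref{ass:strong_convex}) and has $\ell_{g,1}$-Lipschitz gradient (Assumption~\ref{ass:lipschitz}) with unique minimizer $y^*(x^k)$. I would derive a one-step contraction for $\mathbb{E}\|y^{k,t+1}-y^*(x^k)\|^2$ and then unroll it over $t=0,\dots,T-1$, using $y^{k,0}=y^k$ and $y^{k,T}=y^{k+1}$, conditioning throughout on the filtration $\mathcal{F}^{k,t}$ carrying all randomness up to the formation of $y^{k,t}$. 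The first ingredient is unbiasedness of the aggregated gradient: since each client index $c_i^{k,t}$ is drawn uniformly from $[m]$ and the stochastic gradients are unbiased (Assumption~\ref{ass:sto_sample}), taking conditional expectation gives $\mathbb{E}[G^{k,t}\mid\mathcal{F}^{k,t}]=\frac{1}{m}\sum_{i=1}^m\nabla_y g_i(x^k,y^{k,t})=\nabla_y g(x^k,y^{k,t})$.

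The crux of the argument, and the source of the $1/n$ factor behind the linear speedup, is the variance bound on $G^{k,t}$. Because the $n$ client slots are sampled with replacement and the $S$ local samples are independent, the summands $G_i^{k,t}$ are i.i.d. and each unbiased for $\nabla_y g$, so the cross terms vanish and $\mathbb{E}\|G^{k,t}-\nabla_y g\|^2=\frac{1}{n^2}\sum_{i=1}^n\mathbb{E}\|G_i^{k,t}-\nabla_y g\|^2$. For a single slot I would split the error into the within-client stochastic noise $G_i^{k,t}-\nabla_y g_{c_i^{k,t}}$ and the between-client heterogeneity $\nabla_y g_{c_i^{k,t}}-\nabla_y g$, which are orthogonal in conditional expectation; the former is an average of $S$ i.i.d. centered gradients and contributes at most $\sigma_{g,1}^2/S$ by Assumption~\ref{ass:bounded_var}, while the latter contributes at most $\sigma_g^2$ by Assumption~\ref{ass:bounded_global_variance}. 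Averaging over the $n$ independent slots then produces the overall variance bound of the order $\frac{4(\sigma_{g,1}^2+\sigma_g^2)}{nS}$ used in the lemma.

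With unbiasedness and this variance bound in hand, I would expand $\|y^{k,t+1}-y^*(x^k)\|^2=\|y^{k,t}-y^*(x^k)-\beta_{k,t}G^{k,t}\|^2$ and take conditional expectation, obtaining $\mathbb{E}\|y^{k,t+1}-y^*\|^2=\|y^{k,t}-y^*\|^2-2\beta_{k,t}\langle\nabla_y g,\,y^{k,t}-y^*\rangle+\beta_{k,t}^2(\|\nabla_y g\|^2+V)$, where $V$ denotes the variance bound above and $y^*$ abbreviates $y^*(x^k)$. Using co-coercivity of $\nabla_y g$ (valid for $\ell_{g,1}$-smooth convex functions) to write $\|\nabla_y g\|^2\le\ell_{g,1}\langle\nabla_y g,\,y^{k,t}-y^*\rangle$, together with $\mu_g$-strong convexity $\langle\nabla_y g,\,y^{k,t}-y^*\rangle\ge\mu_g\|y^{k,t}-y^*\|^2$ and the step-size restriction $\beta_{k,t}\le\frac{1}{2\ell_{g,1}}$, the $\|\nabla_y g\|^2$ term is absorbed and the inner-product coefficient becomes at most $-\beta_{k,t}$, giving the one-step contraction $\mathbb{E}\|y^{k,t+1}-y^*\|^2\le(1-\beta_{k,t}\mu_g)\|y^{k,t}-y^*\|^2+\beta_{k,t}^2 V$. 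Finally I would unroll this recursion from $t=0$ to $T-1$, bound each product $\prod_{s=t+1}^{T-1}(1-\beta_{k,s}\mu_g)$ multiplying a noise term by $1$, and take total expectation to reach the stated inequality.

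I expect the variance computation to be the main obstacle, as it is what delivers the $1/n$ improvement: it hinges on the sampling-with-replacement independence of the $n$ client slots and on cleanly separating the within-client and between-client noise so that Assumptions~\ref{ass:bounded_var} and~\ref{ass:bounded_global_variance} can be applied to the right pieces. The contraction and the unrolling are routine strongly-convex SGD bookkeeping.
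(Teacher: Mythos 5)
Your overall strategy is sound and genuinely different from the paper's: you bound the variance of $G^{k,t}$ around its mean at the \emph{current} iterate and absorb $\|\nabla_y g(x^k,y^{k,t})\|^2$ via co-coercivity, whereas the paper bounds the \emph{second moment of the stochastic gradient evaluated at $y^*(x^k)$} (where $\nabla_y g(x^k,y^*(x^k))=0$, so the cross-client terms can be killed by the first-order optimality condition $\sum_p\nabla_y g_p(x^k,y^*(x^k))=0$, even without assuming with-replacement sampling), and then controls the displacement from $y^{k,t}$ to $y^*(x^k)$ through the smoothness inequality $\|\nabla_y g(\cdot;\xi)-\nabla_y g(y^*;\xi)\|^2\le 2\ell_{g,1}\big(g(\cdot;\xi)-g(y^*;\xi)-\langle\nabla_y g(y^*;\xi),\cdot-y^*\rangle\big)$ (Lemma~1 of \citealt{woodworth2020minibatch}), pairing the resulting function-value gap with the $-2\beta_{k,t}\big(g(x^k,y^{k,t})-g(x^k,y^*(x^k))\big)$ term that strong convexity provides. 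Both routes produce the same contraction factor $(1-\beta_{k,t}\mu_g)$, and your co-coercivity step under $\beta_{k,t}\le\frac{1}{2\ell_{g,1}}$ is correct.

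The genuine gap is in the variance constant, which you wave through as ``of the order $\frac{4(\sigma_{g,1}^2+\sigma_g^2)}{nS}$.'' Your own decomposition does not deliver that: the within-client noise averages down over the $S$ local samples and contributes $\sigma_{g,1}^2/S$ per slot, but the between-client heterogeneity $\nabla_y g_{c_i^{k,t}}-\nabla_y g$ is \emph{constant across the $S$ samples of a slot} (all $S$ samples come from the same client), so it contributes $\sigma_g^2$ per slot, not $\sigma_g^2/S$. Averaging over $n$ independent slots you get $V\le\frac{1}{n}\big(\frac{\sigma_{g,1}^2}{S}+\sigma_g^2\big)$, which is strictly weaker than the stated $\frac{4(\sigma_{g,1}^2+\sigma_g^2)}{nS}$ whenever $S>4$; the $1/(ST)$ scaling of the heterogeneity noise is exactly what the downstream complexity accounting (Corollary~1, where $ST=\Omega(\kappa_g^4)$) relies on, so this is not a cosmetic loss. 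To recover an $S$-independent treatment of $\sigma_g^2$ you would need to follow the paper and evaluate the noise at $y^*(x^k)$ — though note that even there, the within-batch cross terms $\mathbb{E}\langle\nabla_y g_{c_i}(y^*;\xi_{i,s}),\nabla_y g_{c_i}(y^*;\xi_{i,s'})\rangle=\mathbb{E}\|\nabla_y g_{c_i}(y^*)\|^2$ for $s\ne s'$ do not vanish, so the heterogeneity contribution is delicate and deserves explicit care rather than being absorbed into an ``order of'' statement.
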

\begin{proof}[\bf Proof of \Cref{le:error_FEDINN_pre}]
We first show
\begin{align}
    &\mathbb{E}\left\|\frac{1}{nS}\sum_{i=1}^n \sum_{s=0}^{S-1}\nabla_y g_{c^{k,t}_i}\left(x^k,y^*(x^k); \xi_{i, s}^{k, t}\right)\right\|^2\leq \frac{2(\sigma_{g,1}^2+\sigma_g^2)}{nS}.\label{lemma:result_1}
\end{align}
By the algorithm update, we have 
\begin{align}
    &\mathbb{E}\left\|\frac{1}{nS}\sum_{i=1}^n \sum_{s=0}^{S-1}\nabla_y g_{c^{k,t}_i}\left(x^k,y^*(x^k); \xi_{i, s}^{k, t}\right)\right\|^2
    \nonumber
    \\
    =&\frac{1}{n^2}\sum_{i=1}^n \mathbb{E}\left\|\frac{1}{S}\sum_{s=0}^{S-1}\nabla_y g_{c^{k,t}_i}\left(x^k,y^*(x^k); \xi_{i, s}^{k, t}\right)\right\|^2
    \nonumber
    \\
    &+\frac{1}{n^2}\sum_{1\leq i\neq j\leq n}\mathbb{E}\left\langle \frac{1}{S}\sum_{s=0}^{S-1}\nabla_y g_{c^{k,t}_i}\left(x^k,y^*(x^k); \xi_{i, s}^{k, t}\right),\frac{1}{S}\sum_{s=0}^{S-1}\nabla_y g_{c^{k,t}_j}\left(x^k,y^*(x^k); \xi_{j, s}^{k, t}\right) \right\rangle
    \nonumber
    \\
    =&\frac{1}{n^2S^2}\sum_{i=1}^n\sum_{s=0}^{S-1}\mathbb{E}\left\|\nabla_y g_{c^{k,t}_i}\left(x^k,y^*(x^k); \xi_{i, s}^{k, t}\right)\right\|^2 
    \nonumber
    \\
    &+\frac{1}{n^2}\sum_{1\leq i\neq j\leq n}\mathbb{E}\left\langle \frac{1}{S}\sum_{s=0}^{S-1}\nabla_y g_{c^{k,t}_i}\left(x^k,y^*(x^k); \xi_{i, s}^{k, t}\right),\frac{1}{S}\sum_{s=0}^{S-1}\nabla_y g_{c^{k,t}_j}\left(x^k,y^*(x^k); \xi_{j, s}^{k, t}\right) \right\rangle
    \nonumber
    \\
    \leq&\frac{2}{n^2S^2}\sum_{i=1}^n\sum_{s=0}^{S-1}\mathbb{E}\left\|\nabla_y g_{c^{k,t}_i}\left(x^k,y^*(x^k); \xi_{i, s}^{k, t}\right)-\nabla_y g_{c^{k,t}_i}\left(x^k,y^*(x^k)\right)\right\|^2
    \nonumber
    \\
    &+\frac{2}{n^2S^2}\sum_{i=1}^n\sum_{s=0}^{S-1}\mathbb{E}\left\|\nabla_y g_{c^{k,t}_i}\left(x^k,y^*(x^k)\right)-\nabla_y g\left(x^k,y^*(x^k)\right)\right\|^2
    \nonumber
    \\
    &+\frac{1}{n^2}\sum_{1\leq i\neq j\leq n}\mathbb{E}\left\langle \frac{1}{S}\sum_{s=0}^{S-1}\nabla_y g_{c^{k,t}_i}\left(x^k,y^*(x^k); \xi_{i, s}^{k, t}\right),\frac{1}{S}\sum_{s=0}^{S-1}\nabla_y g_{c^{k,t}_j}\left(x^k,y^*(x^k); \xi_{j, s}^{k, t}\right) \right\rangle
    \nonumber
    \\
    \leq& \frac{2(\sigma_{g,1}^2+\sigma_g^2)}{nS}
   +\frac{1}{n^2}\sum_{1\leq i\neq j\leq n}\mathbb{E}\left\langle \frac{1}{S}\sum_{s=0}^{S-1}\nabla_y g_{c^{k,t}_i}\left(x^k,y^*(x^k); \xi_{i, s}^{k, t}\right),\frac{1}{S}\sum_{s=0}^{S-1}\nabla_y g_{c^{k,t}_j}\left(x^k,y^*(x^k); \xi_{j, s}^{k, t}\right) \right\rangle,\label{lemma:inner_product}
\end{align}
where the second equality comes from the pairwise independence between $\xi$, and the last inequality is due to Assumptions~\ref{ass:bounded_var} and~\ref{ass:bounded_global_variance}. We next show the second term in~\cref{lemma:inner_product} equal to zero:
\begin{align}
    &\sum_{1\leq i\neq j\leq n}\mathbb{E}\left\langle \frac{1}{S}\sum_{s=0}^{S-1}\nabla_y g_{c^{k,t}_i}\left(x^k,y^*(x^k); \xi_{i, s}^{k, t}\right),\frac{1}{S}\sum_{s=0}^{S-1}\nabla_y g_{c^{k,t}_j}\left(x^k,y^*(x^k); \xi_{j,s}^{k, t}\right) \right\rangle
    \nonumber
    \\
    &=\sum_{1\leq i\neq j\leq n}\mathbb{E}\left\langle \nabla_y g_{c^{k,t}_i}\left(x^k,y^*(x^k)\right),\nabla_y g_{c^{k,t}_j}\left(x^k,y^*(x^k) \right) \right\rangle
    \nonumber
    \\
    &=\sum_{1\leq i\neq j\leq n}\sum_{1\leq p\neq q\leq m}\mathbb{E}\left[\left\langle \nabla_y g_{c^{k,t}_i}\left(x^k,y^*(x^k)\right),\nabla_y g_{c^{k,t}_j}\left(x^k,y^*(x^k) \right) \right\rangle| c_i^{k,t}=p,c_j^{k,t}=q\right]\cdot\mathbb{P}(c_i^{k,t}=p,c_j^{k,t}=q)
    \nonumber
    \\
    &=\sum_{1\leq p\neq q\leq m}\mathbb{E}\left[\left\langle \nabla_y g_{p}\left(x^k,y^*(x^k)\right),\nabla_y g_{q}\left(x^k,y^*(x^k) \right) \right\rangle\right]\sum_{1\leq i\neq j\leq n}\mathbb{P}(c_i^{k,t}=p,c_j^{k,t}=q)
    \nonumber
    \\
    &=\sum_{1\leq i\neq j\leq n}\mathbb{P}(c_i^{k,t}=1,c_j^{k,t}=2)\sum_{1\leq p\neq q\leq m}\mathbb{E}\left[\left\langle \nabla_y g_{p}\left(x^k,y^*(x^k)\right),\nabla_y g_{q}\left(x^k,y^*(x^k) \right) \right\rangle\right]
    \nonumber
    \\
    &\leq \sum_{1\leq i\neq j\leq n}\mathbb{P}(c_i^{k,t}=1,c_j^{k,t}=2) \mathbb{E}\left\|\sum_{p=1}^m\nabla g_p(x^k,y^*(x^k))\right\|^2 = 0,
    \nonumber
\end{align}
where the third equality is based on the fact that $\mathbb{P}(c_i^{k,t}=p,c_j^{k,t}=q)$ is constant cross different combination $(p,q)$ and the last equality is due the optimality condition of the lower level problem. Next we show that for any $t\in \{0,..., T-1\}$, 
\begin{equation}
    \mathbb{E} \left[\left\| y^{k,t+1}-y^{*}(x^k)\right\|^2\right] \leq (1-\beta_{k,t} \mu_g) \mathbb{E}\left[\left\|y^{k,t}-y^*(x^k)\right\|^2\right] +\frac{4\beta_{k,t}^2(\sigma_{g,1}^2+\sigma_g^2)}{nS}.\nonumber
\end{equation}
Note that 
\begin{align}
    &\mathbb{E} \left\|y^{k,t+1}-y^*(x^{k})\right\|^2
    \nonumber
    \\
    =&\mathbb{E}\left\|y^{k,t}-\frac{\beta_{k,t}}{n}\sum_{i=1}^n G_i^{k,t}-y^*(x^{k})\right\|^2\nonumber 
    \\
    =&\mathbb{E}\left\|y^{k,t}-y^*(x^k)\right\|^2-2\beta_{k,t}\mathbb{E}\left\langle\frac{1}{n}\sum_{i=1}^n G_i^{k,t},y^{k,t}-y^*(x^k)\right\rangle+\beta_{k,t}^2\mathbb{E}\left\|\frac{1}{n}\sum_{i=1}^n G_i^{k,t}\right\|^2
    \nonumber
    \\
    =&\mathbb{E}\left\|y^{k,t}-y^*(x^k)\right\|^2-2\beta_{k,t}\mathbb{E}\left\langle\nabla_y g(x^k, y^{k,t}),y^{k,t}-y^*(x^k)\right\rangle+\beta_{k,t}^2\mathbb{E}\left\|\frac{1}{n}\sum_{i=1}^n G_i^{k,t}\right\|^2
    \nonumber
    \\
    \leq& (1-\beta_{k,t}\mu_g)\left\|y^{k,t}-y^*(x^k)\right\|^2-2\beta_{k,t} \mathbb{E}\left[g(x^k,y^{k,t})-g(x^k,y^*(x^k))\right]+\beta_{k,t}^2\mathbb{E}\left\|\frac{1}{n}\sum_{i=1}^n G_i^{k,t}\right\|^2,\label{lemma:y_diff}
\end{align}
where $G_i^{k,t}=\frac{1}{S}\sum_{s=0}^{S-1}\nabla_y g_{c_i^{k,t}}(x^k,y^{k,t};\xi_{i,s}^{k,t})$ as defined in~\Cref{alg:HDMB-Partial}.
We use the fact that $G_i^{k,t}$ is an unbiased gradient estimator in the third equality and employ the $\mu_g$-strong convexity
of $g(x, y)$ with respect to $y$ in the last inequality. To bound the last term in~\cref{lemma:y_diff}, we have
\begin{align}
    &\mathbb{E}\left\|\frac{1}{n}\sum_{i=1}^n G_i^{k,t}\right\|^2
    \nonumber
    \\
    =&\mathbb{E}\left\|\frac{1}{nS}\sum_{i=1}^n\sum_{s=0}^{S-1}\left[\nabla_y g_{c_i^{k,t}}(x^k,y^{k,t};\xi_{i,s}^{k,t})-\nabla_y g_{c_i^{k,t}}(x^k,y^{*}(x^k);\xi_{i,s}^{k,t})+\nabla_y g_{c_i^{k,t}}(x^k,y^{*}(x^k);\xi_{i,s}^{k,t})\right]\right\|^2
    \nonumber
    \\
    \leq& 2\mathbb{E}\left\|\frac{1}{nS}\sum_{i=1}^n\sum_{s=0}^{S-1}\left[\nabla_y g_{c_i^{k,t}}(x^k,y^{k,t};\xi_{i,s}^{k,t})-\nabla_y g_{c_i^{k,t}}(x^k,y^{*}(x^k);\xi_{i,s}^{k,t})\right]\right\|^2
    \nonumber
    \\
    &+2\mathbb{E}\left\|\frac{1}{nS}\sum_{i=1}^n\sum_{s=0}^{S-1}\left[\nabla_y g_{c_i^{k,t}}(x^k,y^{*}(x^k);\xi_{i,s}^{k,t})\right]\right\|^2
    \nonumber
    \\
    \leq& \frac{2}{nS}\sum_{i=1}^n\sum_{s=0}^{S-1}\mathbb{E}\left\|\nabla_y g_{c_i^{k,t}}(x^k,y^{k,t};\xi_{i,s}^{k,t})-\nabla_y g_{c_i^{k,t}}(x^k,y^{*}(x^k);\xi_{i,s}^{k,t})\right\|^2+\frac{4(\sigma_{g,1}^2+\sigma_g^2)}{nS}
    \nonumber
    \\
    \leq& \frac{4l_{g,1}}{nS}\sum_{i=1}^n\sum_{s=0}^{S-1}\mathbb{E}\left[g_{c_i^{k,t}}(x^k,y^{k,t};\xi_{i,s}^{k,t})-g_{c_i^{k,t}}(x^k,y^{*}(x^k);\xi_{i,s}^{k,t})-\left\langle \nabla_y g_{c_i^{k,t}}(x^k,y^{*}(x^k);\xi_{i,s}^{k,t}),y^{k,t}-y^{*}(x^k)\right\rangle\right]
    \nonumber
    \\
    &+\frac{4(\sigma_{g,1}^2+\sigma_g^2)}{nS}
    \nonumber
    \\=&4l_{g,1}\mathbb{E}\left[g\left(x^k,y^{k,t}\right)-g\left(x^k,y^*(x^k)\right)\right]+\frac{4(\sigma_{g,1}^2+\sigma_g^2)}{nS},
    \nonumber
\end{align}
where the second inequality uses the previous result of~\cref{lemma:result_1} and the third inequality uses Lemma 1 in~\citep{woodworth2020minibatch}. Plugging this into~\cref{lemma:y_diff} and enforcing $\beta_{k,t}\leq\frac{1}{2l_{g,1}}$ yield
\begin{align}
    &\mathbb{E} \left\|y^{k,t+1}-y^*(x^{k})\right\|^2
    \nonumber
    \\
    \leq& (1-\beta_{k,t}\mu_g)\left\|y^{k,t}-y^*(x^k)\right\|^2+2\beta_{k,t}(2\beta_{k,t}l_{g,1}-1) \mathbb{E}\left[g(x^k,y^{k,t})-g(x^k,y^*(x^k))\right]+\frac{4\beta_{k,t}^2(\sigma_{g,1}^2+\sigma_g^2)}{nS}
    \nonumber
    \\
    \leq& (1-\beta_{k,t}\mu_g)\left\|y^{k,t}-y^*(x^k)\right\|^2+\frac{4\beta_{k,t}^2(\sigma_{g,1}^2+\sigma_g^2)}{nS}.\label{lemma:y_diff_3}
\end{align}
Applying recursion on~\cref{lemma:y_diff_3}, we obtain 
\begin{equation*}
    \mathbb{E} \left[\left\| y^{k,T}-y^{*}(x^k)\right\|^2\right] \leq \left(\prod_{t=0}^{T-1}(1-\beta_{k,t} \mu_g)\right) \mathbb{E}\left[\left\|y^{k,0}-y^*(x^k)\right\|^2\right] +\frac{4(\sigma_{g,1}^2+\sigma_g^2)}{nS}\sum_{t=0}^{T-1}\beta_{k,t}^2,
\end{equation*}
which completes the proof.
\end{proof}
\begin{remark}
    In the case of full client participation and the clients are sampled without replacement, from the similar analysis above, we have 
\begin{equation}
    \mathbb{E} \left[\left\| y^{k+1}-y^{*}(x^k)\right\|^2\right] \leq \left(\prod_{t=0}^{T-1}(1-\beta_{k,t} \mu_g)\right) \mathbb{E}\left[\left\|y^k-y^*(x^k)\right\|^2\right] +\frac{4\sigma_{g,1}^2}{nS}\sum_{t=0}^{T-1}\beta_{k,t}^2.
    \nonumber
\end{equation}
Especially Assumption~\ref{ass:bounded_global_variance} is released for this scenario. 
\end{remark}

\begin{lemma}\label{le:error_FEDINN_post} 
Suppose Assumptions~\ref{ass:lipschitz} to~\ref{ass:bounded_var} hold, \Cref{alg:HDMB-Partial} guarantees:
\begin{align}
     \mathbb{E}\left[\left\|y^{k+1} -y^*(x^{k+1})\right\|^2\right] \leq & a_1(\alpha_k) \mathbb{E}\left[\left\|\overline{\mathcal{H}}(x^k,y^{k+1})\right\|^2\right]
     \nonumber
     \\ & +a_2(\alpha_k, n)\mathbb{E}\left[\left\|y^{k+1} -y^*(x^k)\right\|^2\right] +a_3(\alpha_k, n) \tilde{\sigma}^2_f,\nonumber
\end{align}
where the constants $a_1(\alpha_k),a_2(\alpha_k, n)$ and $a_3(\alpha_k, n)$ are given by 
\begin{align}
    & a_1(\alpha_k) :=  L_y^2 \alpha_k^2 +\frac{L_y\alpha_k}{4M_f}+\frac{L_{yx}\alpha_k^2}{2\eta},
    \nonumber
    \\ & a_2(\alpha_k, n):=  1+4M_fL_y\alpha_k +{\frac{\eta L_{yx}\tilde{D}^2_f\alpha_k^2}{2}},
    \nonumber
    \\ & a_3(\alpha_k, n):=  \frac{\alpha_k^2 L_y^2}{n} +\frac{L_{yx}\alpha_k^2}{2\eta n},
    \nonumber
\end{align}
for any $\eta > 0$.
\end{lemma}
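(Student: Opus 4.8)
The plan is to control the drift in the lower-level optimum caused by the outer update $x^{k+1} = x^k - \alpha_k h$, where $h = \frac{1}{n}\sum_{i=1}^n \mathcal{H}_i(x^k,y^{k+1})$ satisfies $\mathbb{E}[h\mid\mathcal{F}^k] = \overline{\mathcal{H}}(x^k,y^{k+1})$. Writing $e := y^{k+1}-y^*(x^k)$ and $\Delta x := x^{k+1}-x^k = -\alpha_k h$, I would first decompose
\[
\left\|y^{k+1}-y^*(x^{k+1})\right\|^2 = \|e\|^2 + 2\langle e,\, y^*(x^k)-y^*(x^{k+1})\rangle + \left\|y^*(x^{k+1})-y^*(x^k)\right\|^2.
\]
The last term is bounded by $L_y^2\|\Delta x\|^2 = L_y^2\alpha_k^2\|h\|^2$ using the $L_y$-Lipschitz continuity of $y^*$ from \Cref{le:lipshitz}. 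The crux lies in the cross term, which I would expand through a first-order Taylor expansion of $y^*$.

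For the cross term, I would use the exact expansion $y^*(x^{k+1})-y^*(x^k) = \nabla y^*(x^k)\Delta x + r$ with remainder $\|r\|\le\frac{L_{yx}}{2}\|\Delta x\|^2$, justified by the $L_{yx}$-Lipschitz continuity of $\nabla y^*$ (\Cref{le:lipschitz_y}). Since $\Delta x = -\alpha_k h$, this gives
\[
2\langle e,\, y^*(x^k)-y^*(x^{k+1})\rangle = 2\alpha_k\langle e,\, \nabla y^*(x^k) h\rangle - 2\langle e,\, r\rangle.
\]
Taking conditional expectation on the first term replaces $h$ by $\overline{\mathcal{H}}$ (as $e$ and $\nabla y^*(x^k)$ are $\mathcal{F}^k$-measurable); bounding $\|\nabla y^*(x^k)\|\le L_y$ and then applying Young's inequality with parameter $4M_f$ yields $4M_f L_y\alpha_k\|e\|^2 + \frac{L_y\alpha_k}{4M_f}\|\overline{\mathcal{H}}\|^2$, matching the corresponding entries of $a_2$ and $a_1$. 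For the remainder term I would bound $-2\langle e,r\rangle \le L_{yx}\|e\|\|\Delta x\|^2$ and apply Young's inequality with the free parameter $\eta$ to split it as $\frac{\eta L_{yx}}{2}\|e\|^2\|\Delta x\|^2 + \frac{L_{yx}}{2\eta}\|\Delta x\|^2$.

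The final step is to pass to full expectation and eliminate the remaining $\|h\|^2$ and $\|e\|^2\|\Delta x\|^2$ factors. In the term $\frac{\eta L_{yx}}{2}\|e\|^2\|\Delta x\|^2$, conditioning on $\mathcal{F}^k$ and using $\mathbb{E}[\|h\|^2\mid\mathcal{F}^k]\le\tilde{D}_f^2$ (which follows from \Cref{le:bounded_hyper_gradient} and convexity, keeping the $\mathcal{F}^k$-measurable $\|e\|^2$ outside) turns it into $\frac{\eta L_{yx}\tilde{D}_f^2\alpha_k^2}{2}\|e\|^2$, completing $a_2$. For every term proportional to $\|h\|^2$, I would invoke \Cref{le:bounded_hyper_gradient_over_n} in the form $\mathbb{E}[\|h\|^2]\le\mathbb{E}[\|\overline{\mathcal{H}}\|^2]+\tilde{\sigma}_f^2/n$, which splits each into an $\|\overline{\mathcal{H}}\|^2$ contribution (feeding $a_1$) and a variance contribution scaled by $1/n$ (feeding $a_3$). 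Collecting the $\|\overline{\mathcal{H}}\|^2$, $\|e\|^2$, and $\tilde{\sigma}_f^2$ coefficients then recovers exactly $a_1(\alpha_k)$, $a_2(\alpha_k,n)$, and $a_3(\alpha_k,n)$.

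The main obstacle is the careful bookkeeping of conditional expectations in the cross term: one must exploit the unbiasedness $\mathbb{E}[h\mid\mathcal{F}^k]=\overline{\mathcal{H}}$ to linearize the first-order part \emph{before} applying Young's inequality, yet handle the second-order remainder through the conditional bound $\mathbb{E}[\|h\|^2\mid\mathcal{F}^k]\le\tilde{D}_f^2$ rather than a pathwise bound. Deciding when to condition (so that $\|e\|^2$ stays outside the expectation over $h$) and balancing the fixed Young parameter $4M_f$ against the free $\eta$ so the coefficients land precisely on $a_1,a_2,a_3$ is the delicate part; everything else is routine norm manipulation.
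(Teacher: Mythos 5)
Your proposal is correct and follows essentially the same route as the paper's proof: the same three-term expansion of $\|y^{k+1}-y^*(x^{k+1})\|^2$, the same first-order Taylor split of the cross term using \Cref{le:lipschitz_y}, the same use of unbiasedness to replace $h$ by $\overline{\mathcal{H}}$ before Young's inequality (with the paper's choice $\gamma=M_fL_y\alpha_k$ matching your parameter $4M_f$ after the factor of two), and the same treatment of the remainder via the conditional bound $\tilde{D}_f^2$ and the variance bound $\tilde{\sigma}_f^2/n$. All coefficients land on $a_1,a_2,a_3$ exactly as in the paper.
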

\begin{proof}[\bf Proof of \Cref{le:error_FEDINN_post}]
Note that 
\begin{subequations}
\begin{align}
    \mathbb{E}\left[\left\|y^{k+1} -y^*(x^{k+1})\right\|^2\right]&=\mathbb{E}\left[\left\| y^{k+1} -y^*(x^k)\right\|^2\right]+\mathbb{E}\left[\left\|y^*(x^{k+1})-y^*(x^k)\right\|^2\right]\label{eq:error_FEDINN_post_main}
    \\
    &+2\mathbb{E}\left[\langle y^{k+1} -y^*(x^k),y^*(x^k)-y^*(x^{k+1})\rangle\right].\label{eq:error_FEDINN_post_main_two}
\end{align}
\end{subequations}
To bound the second term of~\cref{eq:error_FEDINN_post_main}, we have 
\begin{align}
    \mathbb{E}\left[\left\|y^*(x^{k+1})-y^*(x^k)\right\|^2\right] \leq & L_y^2\mathbb{E}\left[\left\|x^{k+1}-x^k\right\|^2\right]
    \nonumber
    \\ \leq & L_y^2 \mathbb{E}\left[\left\| \alpha_k \overline{\mathcal{H}}(x^k,y^{k+1})\right\|^2\right]+\frac{\alpha_k^2 L_y^2\tilde{\sigma}^2_f}{n}.
    \nonumber
\end{align}
To bound~\cref{{eq:error_FEDINN_post_main_two}}, we have 
\begin{subequations}
\begin{align}
    &\mathbb{E}\left[\langle y^{k+1} -y^*(x^k),y^*(x^k)-y^*(x^{k+1})\rangle\right]
    \nonumber\\
    &= -\mathbb{E}[
    \langle y^{k+1} -y^*(x^k), \nabla y^*(x^k)(x^{k+1}-x^k)\rangle]
    \label{eq:post_main_third_one}\\
    &-\mathbb{E} \left[\langle y^{k+1} -y^*(x^k),y^*(x^{k+1})-y^*(x^k)-\nabla y^*(x^k)(x^{k+1}-x^k)\rangle\right].\label{eq:post_main_third_two}
\end{align}
\end{subequations}
After plugging the updating step of $x^k$ into~\cref{{eq:post_main_third_one}}, we have  
\begin{align}\label{eq:post_main_third_term1}
    &-\mathbb{E}\left[\left\langle y^{k+1} -y^*(x^k), \nabla y^*(x^k)(x^{k+1}-x^k)\right\rangle\right]
    \nonumber
    \\
    &= -\mathbb{E}\left[\left\langle y^{k+1} -y^{*}(x^k),\alpha_k\nabla y^{*}(x^k) \overline{\mathcal{H}}(x^k,y^{k+1})\right\rangle\right]
    \nonumber
    \\ &\leq \mathbb{E}\left[\left\|y^{k+1}-y^*(x^k)\right\|\left\|{\alpha_k}\nabla y^*(x^k) \overline{\mathcal{H}}(x^k,y^{k+1})\right\|\right] 
    \nonumber
    \\ &\leq L_y \mathbb{E}\left[  
    \left\| y^{k+1} -y^*(x^k)\right\| \left\|{\alpha_k} \overline{\mathcal{H}}(x^k, y^{k+1})\right\| 
    \right]
    \nonumber 
    \\ &\stackrel{(a)}{\leq} 2\gamma \mathbb{E} \left[\left\|y^{k+1} -y^*(x^k)\right\|^2\right]+ \frac{L_y^2 \alpha_k^2}{8\gamma} \mathbb{E} \left[\left\| \overline{\mathcal{H}} (x^k, y^{k+1})\right\|^2\right]
    \nonumber 
    \\ &\leq 2M_f L_y\alpha_k \mathbb{E} \left[\left\|y^{k+1} -y^*(x^k)\right\|^2\right]+ {\frac{L_y \alpha_k}{8M_f} \mathbb{E} \left[\left\| \overline{\mathcal{H}} (x^k, y^{k+1})\right\|^2\right]},\nonumber 
\end{align}
where Young's inequality is applied in the inequality (a), and the last inequality comes from setting {$\gamma =M_f L_y \alpha_k$}.
\cref{{eq:post_main_third_two}} can be further bounded as follows, 
\begin{align}
    &-\mathbb{E} \left[\left\langle y^{k+1} -y^*(x^k),y^*(x^{k+1})-y^*(x^k)-\nabla y^*(x^k)(x^{k+1}-x^k)\right\rangle\right]
    \nonumber
    \\ &\leq  \mathbb{E} \left[ \left\| y^{k+1}-y^*(x^k)\right\|\left\|y^*(x^{k+1})-y^*(x^k)-\nabla y^*(x^k)(x^{k+1} -x^k)\right\|\right]
    \nonumber
    \\ &\leq  \frac{L_{yx}}{2}\mathbb{E} \left[\left\|y^{k+1} -y^*(x^k)\right\|\left\|x^{k+1}-x^k\right\|^2\right]
    \nonumber
    \\ &\leq  \frac{\eta L_{yx}}{4}\mathbb{E} \left[\left\|y^{k+1} -y^*(x^k)\right\|^2\left\|x^{k+1}-x^k\right\|^2\right]+\frac{L_{yx}}{4\eta} \mathbb{E}\left[\left\|x^{k+1} -x^k\right\|^2\right]
    \nonumber
    \\ &\leq  {\frac{\eta L_{yx}\tilde{D}^2_f \alpha_k^2}{4}\mathbb{E}\left[\left\|y^{k+1} -y^*(x^k)\right\|^2\right]}+\frac{L_{yx}\alpha_k^2}{4\eta}\mathbb{E}\left[\left\| \overline{\mathcal{H}}(x^k,y^{k+1})\right\|^2\right] +\frac{L_{yx}\alpha_k^2\tilde{\sigma}_f^2}{4\eta n},
    \nonumber
\end{align}
where~\Cref{le:bounded_hyper_gradient} is applied in the last inequality.

Combining and rearranging the above inequalities complete the proof. 
\end{proof}

\begin{proof}[\bf Proof of \Cref{thm:full_worker}]
Motivated by~\citealt{chen2021closing,tarzanagh2022fednest}, we define the following Lyapunov function
\begin{equation}
    W^k := f(x^k) +\frac{M_f}{L_y} \left\|y^k-y^*(x^k)\right\|^2.
    \nonumber
\end{equation}
The difference between the two Lyapunov functions is bounded as 
\begin{equation}
    W^{k+1} -W^k = f(x^{k+1})-f(x^k) +\frac{M_f}{L_y} (\left\|y^{k+1}-y^*(x^{k+1})\right\|^2-\left\|y^k-y^*(x^k)\right\|^2).
    \nonumber
\end{equation}
From \Cref{le:descent} and \Cref{le:error_FEDINN_post}, we obtain
\begin{subequations}
\begin{align}
    \mathbb{E}\left[W^{k+1}\right]-\mathbb{E}\left[W^k\right]\leq& {\alpha_k b^2} +\frac{\alpha_k^2 L_f \tilde{\sigma}_f^2}{n} +\alpha_k M_f^2 \mathbb{E}[\left\|y^{k+1}-y^{*}(x^k)\right\|^2]
\nonumber
\\ &+ (\alpha_k^2 L_f-\frac{\alpha_k}{2})\mathbb{E} \left[\left\| \overline{\mathcal{H}}(x^k,y^{k+1})\right\|^2\right] -\frac{\alpha_k}{2} \mathbb{E}\left[\left\|\nabla f(x^k)\right\|^2\right]
\nonumber 
\\ &+ \frac{a_1(\alpha_k)M_f}{L_y} \mathbb{E}\left[\left\|\overline{\mathcal{H}}(x^k,y^{k+1})\right\|^2\right]
     \nonumber
     \\ & +\frac{a_2(\alpha_k, n)M_f}{L_y}\mathbb{E}\left[\left\|y^{k+1} -y^*(x^k)\right\|^2\right] +\frac{a_3(\alpha_k, n)M_f\tilde{\sigma}^2_f}{L_y}
     \nonumber
     \\ & -\frac{M_f}{L_y} \mathbb{E} \left[\left\|y^k-y^*(x^k)\right\|\right]
     \nonumber
     \\ = & { \alpha_k b^2} +(\frac{\alpha_k^2 L_f }{n}+\frac{a_3(\alpha_k,n)M_f}{L_y})\tilde{\sigma}_f^2-\frac{\alpha_k}{2}\mathbb{E}\left [\left\|\nabla f(x^k)\right\|^2\right]
     \nonumber
     \\ & + (\alpha_k^2 L_f-\frac{\alpha_k}{2}+\frac{a_1(\alpha_k)M_f}{L_y})\mathbb{E} \left[\left\|  \overline{\mathcal{H}}(x^k,y^{k+1})\right\|^2\right]\label{eq:main_thm_enforce_negative}
     \\ &+ (\alpha_k M_f^2+ \frac{a_2(\alpha_k, n)M_f}{L_y})\mathbb{E} \left[\left\|y^{k+1}-y^*(x^k)\right\|^2\right]-\frac{M_f}{L_y}\mathbb{E}\left[\left\|y^k-y^*(x^k)\right\|\right].\label{eq:main_thm_enforce_negative2}
\end{align}
\end{subequations}
Note that \cref{eq:main_thm_enforce_negative}$\leq 0$ if 
\begin{equation}\label{thm:alpha_1}
    \alpha_k \leq \hat{\alpha}_1:=\frac{1}{2L_f +4M_f L_y +\frac{2M_f L_{yx}}{L_y \eta}}
\end{equation}
We enforce $\alpha_k \leq \hat{\alpha}_1$ in the following context. 

Based on \cref{le:error_FEDINN_pre}, \cref{eq:main_thm_enforce_negative2} can be further bounded as 
\begin{align}
    \cref{eq:main_thm_enforce_negative2}\leq & 4(\alpha_k M_f^2+ \frac{a_2(\alpha_k, n)M_f}{L_y})\frac{T\beta_{k,t}^2(\sigma_{g,1}^2+\sigma_g^2)}{n {S}}
    \nonumber
    \\ & + \frac{M_f}{L_y} \left( \left(M_f L_y\alpha_k+a_2(\alpha_k,n)\right)\left(1-\beta_{k,t}\mu_g\right)^T-1\right) \mathbb{E}\left[\left\|y^k-y^*(x^k)\right\|\right].\label{eq:full_thm_enforce_negative3}
\end{align}
If $\beta_{k,t} \leq \frac{1}{\mu_g}$, \cref{eq:full_thm_enforce_negative3} is nonpositive if 
\begin{align}
    &(1+5M_fL_y\alpha_k+\frac{\eta L_{yx}\tilde{D}_f^2\alpha_k^2}{2})(1-\beta_{k,t}\mu_g)^T-1 \leq 0
    \nonumber
     \\ & \Leftarrow 5M_fL_y\alpha_k+\frac{\eta L_{yx}\tilde{D}_f^2\alpha_k^2}{2}\leq T\beta_{k,t}\mu_g
    \nonumber
    \\ & \Leftarrow
    \beta_{k,t} \geq \left(\frac{5M_f L_y}{\mu_g}+\frac{\eta L_{yx}\tilde{D}^2_f\alpha_k}{2\mu_g}\right)\frac{\alpha_k}{T}\nonumber
\end{align}
For simplicity, we remove the subscript $t$ from $\beta_{k,t}$ and enforce
\begin{equation}\label{eq:main_enforce_beta_1}
    \beta_{k} = \bar{\beta}\frac{\alpha_k}{T}\nonumber
\end{equation}
where 
\begin{equation}
    \bar{\beta}:=\frac{5M_f L_y}{\mu_g}+\frac{\eta L_{yx}\tilde{D}^2_f\hat{\alpha}_1}{2\mu_g},\label{thm:beta_bar}
\end{equation}
which will imply another requirement on $\alpha_k$ since $\beta_{k}$ should be less than $\frac{1}{2l_{g,1}}$ as a condition of \Cref{le:error_FEDINN_pre}, i.e.,
\begin{equation}
    \alpha_k \leq \hat{\alpha}_2:= \frac{T}{2 l_{g,1}\bar{\beta}}.\label{thm:alpha_2}
\end{equation}
After rearranging, we obtain
\begin{align}
    \mathbb{E}\left[W^{k+1}\right]-\mathbb{E}\left[W^k\right]\leq & \alpha_k b^2 +\left(\frac{\alpha_k^2 L_f}{n}+\frac{a_3(\alpha_k,n)M_f}{L_y}\right)\tilde{\sigma}_f^2-\frac{\alpha_k}{2}\mathbb{E}\left [\left\|\nabla f(x^k)\right\|^2\right]
     \nonumber
     \\ &+4\left(\alpha_k M_f^2+ \frac{a_2(\alpha_k, n)M_f}{L_y}\right)\frac{T\beta_k^2(\sigma_{g,1}^2+\sigma_g^2)}{n{S}}.\label{eq:main_thm_enforce_result1}\nonumber
\end{align}
Then telescoping gives
\begin{align}
    \frac{1}{K} \sum_{k=0}^{K-1} \mathbb{E} \left[\left\|\nabla f(x^k)\right\|^2\right]\leq & \frac{2}{\sum_{k=0}^{K-1}\alpha_k}(\triangle_w)+{2b^2} +\frac{2}{\sum_{k=0}^{K-1}\alpha_k}\sum_{k=0}^{K-1}\left(\frac{\alpha_k^2 L_f}{n}+\frac{a_3(\alpha_k,n)M_f}{L_y}\right)\tilde{\sigma}_f^2
    \nonumber
    \\ & + \frac{8}{\sum_{k=0}^{K-1}\alpha_k} \sum_{k=0}^{K-1}\left(\alpha_k M_f^2+ \frac{a_2(\alpha_k, n)M_f}{L_y}\right)\frac{T\beta_k^2(\sigma_{g,1}^2+\sigma_g^2)}{nS},\nonumber
\end{align}
where $\triangle_w:=W^0-\mathbb{E}\left[W^K\right]$. We enforce $\alpha_k \leq \sqrt{\frac{n}{K}} \hat{\alpha}_3$ for some positive constant $\hat{\alpha}_3$, which implies 
\begin{subequations}\label{eq:main_thm_T1}
\begin{align}
    \frac{2}{\sum_{k=0}^{K-1}\alpha_k}(\triangle_w) &= \mathcal{O}\left(\frac{1}{\min(\hat{\alpha}_1,\hat{\alpha}_2)K}+{\frac{1}{\hat{\alpha}_3\sqrt{nK}}}\right)
    \\ \frac{2}{\sum_{k=0}^{K-1}\alpha_k}\cdot\sum_{k=0}^{K-1}(\frac{\alpha_k^2 L_f}{n}+\frac{a_3(\alpha_k,n)M_f}{L_y})\tilde{\sigma}_f^2 &=\mathcal{O} \left(\frac{2}{\sum_{k=0}^{K-1}\alpha_k}\cdot \sum_{k=0}^{K-1}\frac{\alpha_k^2}{n}\right)
  = \mathcal{O}\left(\frac{\hat{\alpha}_3}{\sqrt{nK}}\right)
    \\\frac{8}{\sum_{k=0}^{K-1}\alpha_k}\cdot \sum_{k=0}^{K-1}\Big(\alpha_k M_f^2+ \frac{a_2(\alpha_k, n)M_f}{L_y}\Big)\frac{T\beta_k^2(\sigma_{g,1}^2+\sigma_g^2)}{{nS}}&=\mathcal{O}\left(\frac{4}{\sum_{k=0}^{K-1}\alpha_k}\sum_{k=0}^{K-1}\frac{\alpha_k^2}{STn}+\frac{\alpha_k^3}{STn}+\frac{\alpha_k^4}{STn}\right)
    \nonumber\\
    &=\mathcal{O}\left(\frac{\hat{\alpha}_3}{ST\sqrt{nK}}+\frac{\hat{\alpha}_3^2}{STK}+\frac{{\sqrt{n}}\hat{\alpha}_3^3}{STK^{3/2}}\right)
\end{align}
\end{subequations}
Therefore, we obtain 
\begin{equation}
     \frac{1}{K} \sum_{k=0}^{K-1} \mathbb{E} \left[\left\|\nabla f(x^k)\right\|^2\right] =\mathcal{O}\left(\frac{\hat{\alpha}_3+\hat{\alpha}_3^{-1}}{\sqrt{nK}}+\frac{1}{\min(\hat{\alpha}_1,\hat{\alpha}_2)K}+{b^2}\right).\nonumber
\end{equation}
Then, the proof is complete. 
\end{proof}

\begin{proof}[\bf Proof of \Cref{corollary}]
    Enforcing $\eta=\frac{M_f}{L_y}$ in~\cref{thm:alpha_1},~\cref{thm:alpha_2} and~\cref{thm:beta_bar}, yields $\hat{\alpha}_1=\mathcal{O}(\kappa_g^{-3})$, $\hat{\alpha}_2=\mathcal{O}(T\kappa_g^{-3})$ and $\bar{\beta}=\mathcal{O}(\kappa^4)$.  
    Expanding~\cref{eq:main_thm_T1}, we have 
    \begin{align}
    &\frac{2}{\sum_{k=0}^{K-1}\alpha_k}(\triangle_w)= \mathcal{O}(\frac{1}{\min(\hat{\alpha}_1,\hat{\alpha}_2)K}+{\frac{1}{\hat{\alpha}_3\sqrt{nK}}})
    \nonumber
    \\
    &\frac{2}{\sum_{k=0}^{K-1}\alpha_k}\cdot\sum_{k=0}^{K-1}(\frac{\alpha_k^2 L_f}{n}+\frac{a_3(\alpha_k,n)M_f}{L_y})\tilde{\sigma}_f^2\nonumber\\
    &=\mathcal{O}\left(\frac{2}{\sum_{k=0}^{K-1}\alpha_k}\cdot \sum_{k=0}^{K-1}\frac{\kappa_g^3\alpha_k^2}{n}+\frac{\kappa_g^5\alpha_k^2}{n}\right)= \mathcal{O}\left(\frac{\kappa_g^5\hat{\alpha}_3}{\sqrt{nK}}\right)
    \nonumber
    \\
    &\frac{8}{\sum_{k=0}^{K-1}\alpha_k}\cdot \sum_{k=0}^{K-1}(\alpha_k M_f^2+ \frac{a_2(\alpha_k, n)M_f}{L_y})\frac{T\beta_k^2(\sigma_{g,1}^2+\sigma_g^2)}{{nS}}\nonumber\\
    &=\mathcal{O}\left(\frac{4}{\sum_{k=0}^{K-1}\alpha_k}\cdot\sum_{k=0}^{K-1}\frac{\eta\bar{\beta}^2}{nST}\alpha_k^2+\left(\frac{M_f^2\bar{\beta}^2}{nST}+\frac{\eta M_f L_y \bar{\beta}^2}{nST}\right)\alpha_k^3+\frac{\eta^2 \bar{\beta}^2 L_{yz}\tilde{D}_f^2}{nST}\alpha_k^4\right)
    \nonumber\\
    &=\mathcal{O}\left(\frac{4}{\sum_{k=0}^{K-1}\alpha_k}\cdot\sum_{k=0}^{K-1}\frac{\kappa_g^9}{nST}\alpha_k^2+\frac{\kappa_g^{12}}{nST}\alpha_k^3+\frac{\kappa_g^{15}
    }{nST}\alpha_k^4\right)\nonumber
    \\&=\mathcal{O}\left(\frac{\kappa_g^9}{ST\sqrt{nK}}\hat{\alpha}_3+\frac{\kappa_g^{12}}{STK}\hat{\alpha}_3^2+\frac{\kappa_g^{15}\sqrt{n}}{STK^{3/2}}\hat{\alpha}_3^3
    \right)\nonumber
\end{align}
After enforcing $ST=\Omega(\kappa^4)$, $\hat{\alpha}_3=\mathcal{O}\left(\kappa_g^{-5/2}\right)$, and $N=\Omega(\kappa_g \log K)$, which implies $b=\frac{1}{K^{1/4}}$, we have 
\begin{align}
    \frac{1}{K} \sum_{k=0}^{K-1} \mathbb{E} \left[\left\|\nabla f(x^k)\right\|^2\right] =\mathcal{O}\left(\frac{\kappa_g^{5/2}}{\sqrt{nK}}+\frac{\kappa_g^3}{K}+\frac{\kappa_g^{7/2}\sqrt{n}}{K^{3/2}}\right).\nonumber
\end{align}
Then, the proof is complete. 
\end{proof}


\end{document}